\theoremstyle{plain}
\newtheorem{theorem}{Theorem}[section]
\newtheorem{proposition}[theorem]{Proposition}
\newtheorem{corollary}[theorem]{Corollary}
\theoremstyle{definition}
\newtheorem{definition}[theorem]{Definition}
\newtheorem{assumption}[theorem]{Assumption}
\theoremstyle{remark}
\def\eqref#1{equation~\ref{#1}}
\def\1{\bm{1}}
\DeclareMathAlphabet{\mathsfit}{\encodingdefault}{\sfdefault}{m}{sl}
\SetMathAlphabet{\mathsfit}{bold}{\encodingdefault}{\sfdefault}{bx}{n}
\newcommand{\E}{\mathbb{E}}
\newcommand{\Var}{\mathrm{Var}}
\pgfplotsset{compat=1.18}
\theoremstyle{plain}
\newcommand{\Prob}{\mathbb{P}}
\newcommand{\N}{\mathcal{N}}
\newcommand{\G}{\mathrm{Gamma}}
\newcommand{\bx}{\mathbf{x}} 
\newcommand{\by}{\mathbf{y}}
\newcommand{\btheta}{\boldsymbol{\theta}}
\newcommand{\bSigma}{\boldsymbol{\Sigma}}
\newcommand{\reals}{\mathbb{R}}
\newcommand{\Egen}{\mathcal{E}_{\text{gen}}}
\newcommand{\Linf}{\mathcal{L}_{\text{inf}}}
\newcommand{\Ltest}{\mathcal{L}_{\text{gen}}}
\newcommand{\Ltrain}{\mathcal{L}_{\text{train}}}
\crefname{equation}{Eq.}{Eqs.}
\Crefname{equation}{Equation}{Equations}
\crefname{section}{Sec.}{Secs.}
\Crefname{section}{Section}{Sections}
\crefname{appendix}{App.}{Apps.}
\Crefname{appendix}{Appendix}{Appendixes}
\crefname{proposition}{Prop.}{Props.}
\Crefname{proposition}{Proposition}{Propositions}
\crefname{algorithm}{Alg.}{Algs.}
\Crefname{algorithm}{Alg}{Algs}
\crefname{figure}{Fig.}{Figs.}
\Crefname{figure}{Fig}{Figs}
\crefname{appendix}{App.}{Apps.}
\Crefname{appendix}{App}{Apps}
\crefname{theorem}{Thm.}{Thms.}
\crefname{theorem}{Thm}{Thms}
\crefname{corollary}{Cor.}{Cors.}
\crefname{corollary}{Cor}{Cors}
\title{
Learning Shrinks the Hard Tail: Training‑Dependent Inference Scaling in a Solvable Linear Model
}
\author{%
  Noam Levi \\
  École Polytechnique Fédérale de Lausanne (EPFL) \\
  Lausanne, Switzerland \\
  \texttt{noam.levi@epfl.ch} \\
}
\begin{document}

\maketitle

\begin{abstract}
We analyze neural scaling laws in a solvable model of last-layer fine-tuning where targets have intrinsic, instance-heterogeneous difficulty. In our Latent Instance Difficulty (LID) model, each input's target variance is governed by a latent ``precision'' drawn from a heavy-tailed distribution. While generalization loss recovers standard scaling laws, our main contribution connects this to inference. The pass@$k$ failure rate exhibits a power-law decay, $k^{-\beta_\mathrm{eff}}$, but the observed exponent $\beta_\mathrm{eff}$ is training-dependent. It grows with sample size $N$ before saturating at an intrinsic limit $\beta$ set by the difficulty distribution's tail. This coupling reveals that learning shrinks the ``hard tail'' of the error distribution: improvements in the model's generalization error steepen the pass@$k$ curve until irreducible target variance dominates. The LID model yields testable, closed-form predictions for this behavior, including a compute-allocation rule that favors training before saturation and inference attempts after. 
We validate these predictions in simulations and in two real‑data proxies: CIFAR‑10H (human‑label variance) and a maths teacher–student distillation task.

\end{abstract}

\section{Introduction}
\label{sec:introduction}

The remarkable success of large-scale machine learning models is tightly linked to empirically observed and theoretically understood scaling laws, which characterize performance improvements with increasing data, model size, or training time~\citep{kaplan2020scaling, hestness2017deep, rosenfeld2019constructive, Bahri_2024, maloney2022solvable, bordelon2024dynamicalmodelneuralscaling}. These laws have been crucial for predicting learning curves and optimizing resource allocation in the large dataset size $N$ and number of parameters $P$ regime. Most classical analyses focus on the \emph{training} side, relating generalization loss \(\Ltest\) to $N$ (and sometimes $P$) through properties of the data distribution such as spectral decay~\citep{bartlett2020benign, hastie2022surprises, bordelon2020spectrum} or tractable model classes~\citep{maloney2022solvable, tay2022scalinglawsvsmodel}. 

A complementary paradigm emphasizes \emph{inference-time} compute: repeated attempts, best-of-$N$, and search can produce substantial gains on difficult reasoning tasks even without further training, typically evaluated with pass@$k$ under a verifier~\citep{snell2024scalingllmtesttimecompute,brown2024largelanguagemonkeysscaling}. While practical methods for exploiting inference-time compute are rapidly evolving (see~\cref{sec:related}), and some explanations for their success have been given~\citep{levi2024simplemodelinferencescaling,schaeffer2025largelanguagemonkeyspower}, a basic theoretical question remains: \emph{how does training progress shape performance scaling at inference time?} 

In many real-world settings the input–output relationship is not deterministic; some instances are intrinsically more variable than others~\citep{arpit2017closerlookmemorizationdeep, northcutt2021pervasivelabelerrorstest}. Such instance-level heterogeneity affects both stages: during training we observe only a single noisy realization per input, and during inference we may compare a model prediction to multiple fresh realizations under pass@$k$. This motivates a minimal model that explicitly ties training and inference through \emph{instance difficulty}.

\textbf{Setting.} We adopt a deliberately simple but analyzable framework that mirrors common practice in fine-tuning: \emph{last-layer (linear) regression on fixed features} in high dimension, with intrinsically stochastic targets. Each instance $\bx$ carries a latent \emph{precision} $\tau_\bx$ (its ``difficulty''), drawn from a heavy-tailed distribution, which controls the variance of its target around the mean $\bx^\top\btheta^*$. Training observes one realization $y\sim Y^*_\bx$ per input and fits a linear head (ridge/OLS). At inference, we evaluate pass@$k$ with a perfect verifier by drawing $k$ fresh realizations per test input and asking whether at least one lies within a fixed tolerance of the model prediction.

\textbf{Overview and contributions.}
\begin{itemize}
\setlength\itemsep{-.1em}
    \item \textbf{A solvable LID model for linear fine-tuning.} We formalize the Latent Instance Difficulty (LID) model in the high-dimensional linear setting. Training with a single realization per input reduces to ridge/OLS and recovers established generalization scaling with respect to sample size $N$, dimension $d$, and spectral exponent $\alpha$ (including the $1/N$ tail in the classical regime when the average target variance is finite, i.e., $\beta>2$).
    \item \textbf{Training–inference coupling via a two-tail law.} We show that the distribution of \emph{single-trial success probabilities} under pass@$k$ acquires two regularly varying components: an \emph{intrinsic} tail determined by the latent difficulty distribution (exponent $\beta$) and a \emph{finite-$N$} tail governed by the model’s error relative to the mean target (exponent $\gamma(N)\propto 1/\Ltest(N)$). Averaging over trials yields a mixture pass@$k$ law from which the \emph{effective} inference exponent $\beta_\mathrm{eff}(N)\ =\ \min\{\beta,\ \gamma(N)\}$ emerges. Hence the observed pass@$k$ slope \emph{increases with $N$} and \emph{saturates} at the intrinsic difficulty index $\beta$.
    \item \textbf{Predictions and implications.} The theory predicts (i) a crossover surface in $(N,k)$ separating a finite-$N$ (bias-dominated) region from an intrinsic-tail region; (ii) a saturating $\beta_\mathrm{eff}(N)$ curve; and (iii) continued prefactor improvements with $N$ even after the slope has plateaued. These lead to a simple compute-allocation rule: invest in training until $\beta_\mathrm{eff}(N)$ is near $\beta$, then prioritize inference attempts.
    \item \textbf{Evidence in simulation and a real-data proxy.} Controlled simulations confirm the $1/N$ training tail (with the correct intercept), the steepening of pass@$k$ with $N$, and a saturating $\hat\beta_{\mathrm{eff}}(N)$. 
    Two real‑data proxies exhibit analogous behavior: CIFAR‑10H (human‑label variance) and a mathematical reasoning teacher–student distillation setting using GSM8K.
\end{itemize}

\noindent
Taken together, these results provide a clean, testable baseline that unifies training and inference scaling in a setting that captures a widely used fine-tuning regime. The model makes explicit \emph{when} test-time compute should help, \emph{how far} its benefits can go, and \emph{how} those benefits depend on training. 
In the LID linear fine-tuning setting, standard $1/N$-type generalization scaling with $N$ coexists with a heavy-tailed distribution of instance difficulty. As $N$ grows, the model's average squared bias shrinks and the error mass concentrates on a shrinking set of intrinsically hard examples. On the inference side, this shows up as a pass@$k$ failure curve $\Linf(k;N)\sim \tilde P(N)\,k^{-\beta_\mathrm{eff}(N)}$ whose slope $\beta_\mathrm{eff}(N)$ increases with $N$ and eventually plateaus at the intrinsic difficulty index $\beta$. Equivalently, training progressively ``shrinks'' the hard tail of the error distribution until irreducible instance stochasticity dominates and the test time scaling exponent saturates.

\section{Related Work}
\label{sec:related}


Here, we provide a focused review of the prior research most central to our contribution. For an additional detailed literature survey, we refer the reader to App.~\ref{app:sec:related_work} and references therein.

\textbf{Generalization Scaling Laws.}
A large body of work has established that the generalization loss of deep networks often scales as a predictable power law with resources like dataset size \(N\) or model parameters~\citep{hestness2017deep, kaplan2020scaling, hoffmann2022training}. Theoretical frameworks seek to explain these laws by appealing to properties of the data, such as its spectral decay, or the model architecture~\citep{bahri2021explaining, maloney2022solvable}. Our work builds on the standard scaling of generalization loss with \(N\), which serves as the  ``training'' component of our unified model.

\textbf{Inference-Time Scaling.}
A parallel line of work has shown that performance on difficult reasoning tasks can be dramatically improved by increasing compute at inference time, even without further training~\citep{snell2024scalingllmtesttimecompute,brown2024largelanguagemonkeysscaling}. The dominant methods involve generating multiple candidate solutions and selecting the best one, often evaluated with the pass@\(k\) metric. While some theoretical models for this phenomenon have been proposed~\citep{levi2024simplemodelinferencescaling,schaeffer2025largelanguagemonkeyspower}, they typically analyze inference in isolation.

To our knowledge, a simple, solvable model that analytically connects the progress of training (i.e., the decrease in generalization error) to the scaling of inference performance is still missing. Our work aims to provide exactly this unified view.

The rest of the paper is organized as follows: 
In \cref{sec:lid_model}, we present the LID model.
We provide our main results for training and inference scaling laws in \cref{sec:theory_main}. In \cref{sec:exp_real} we present two examples of linear fine-tuning performed on an inherently stochastically labeled dataset, showing that the LID is a reasonable proxy for some real-world tasks. We conclude in \cref{sec:conclusion_main}.

\section{The Latent Instance Difficulty Setting}
\label{sec:lid_model}

Real-world datasets exhibit significant instance-level heterogeneity: some image labels are ambiguous, incurring higher annotator disagreement~\citep{peterson2019humanuncertaintymakesclassification, northcutt2021pervasivelabelerrorstest}, and some reasoning problems are intrinsically harder than others, leading to more variable outputs. Standard homogeneous-noise assumptions overlook this complexity. Addressing this, and connecting it to the distinct scaling behaviors observed during training versus inference (especially when multiple inference attempts can be verified against a correct solution; cf. \Cref{sec:introduction}), motivates our setting. Intuitively, factors that increase training difficulty (harder to learn the mean) also shape inference reliability (harder to match a fresh realization).

\textbf{Last-layer fine-tuning view.}
We instantiate \emph{Latent Instance Difficulty} (LID) within the common fine-tuning regime: a frozen representation produces features \(\bx\in\mathbb{R}^d\) and we learn a linear head \(\bx^\top\btheta\).
Each instance carries a latent \emph{precision} (its ``easiness'') \(\tau_\bx\) that controls the variance of its target around the mean \(\bx^\top\btheta^{*}\).
Training observes \emph{one} realization \(y\) per \(\bx\); at inference, pass@$k$ compares the model’s prediction to \(k\) fresh realizations from the same instance-specific target distribution.

\begin{definition}[Latent Instance Difficulty (LID) Model]
\label{def:lid_main}
The data generation process for an observation \((\bx, y)\) is:
\begin{enumerate}
    \item \textbf{Features.} An input feature vector \(\bx \in \reals^d\) is drawn from a distribution \(p(\bx)\) with zero mean \(\E[\bx]=0\) and covariance \(\bSigma = \E[\bx\bx^T]\). We assume eigenvalues \(\sigma_j^2\) exhibit power-law decay \(\sigma_j^2 \propto j^{-(1+\alpha)}\) for \(j=1,\dots,d\), with \(\alpha>0\). 
    \label{eq:data_dist}
    \item \textbf{Latent difficulty.} Associated with \(\bx\) is a latent \emph{difficulty precision} \(\tau_\bx \in (0,\infty)\), drawn independently of \(\bx\) from
    \begin{equation}
        \tau_\bx \sim \G(\text{shape}=\beta/2,\ \text{rate}=1),
        \label{eq:tau_x_dist_main}
    \end{equation}
    where \(\beta>0\) controls the near-zero tail. Smaller \(\beta\) increases the mass at very low precision (high intrinsic variance), corresponding to ``hard'' instances.
    \item \textbf{Stochastic target.} Conditional on $(\bx,\tau_\bx)$, the instance target is Gaussian around the mean relationship $f^{{*}}(\bx)=\bx^\top\btheta^{{*}}$ with variance inversely proportional to $\tau_\bx$:
    \begin{equation}
        Y^{{*}}_\bx \sim \N\!\left(\text{mean}=\bx^T\btheta^{{*}},\ \text{variance}=\sigma_\eta^2/\tau_\bx\right),
        \quad \text{where~} \sigma_\eta^2>0 \text{~is a global scale.}
        \label{eq:target_dist_lid}
    \end{equation}
    \item \textbf{Training labels.} The observed label is a single realization
    \begin{equation}
        y \sim Y^{{*}}_\bx
        \quad\text{equivalently}\quad
        y=\bx^T\btheta^{{*}}+\eta,\ \ \eta\sim \N(0,\ \sigma_\eta^2/\tau_\bx).
        \label{eq:label_gen_lid_main}
    \end{equation}
\end{enumerate}
\end{definition}

\textbf{Comments.}
(i) The power-law spectrum in \cref{eq:data_dist} abstracts benign-feature regimes observed in practice and used in linear scaling analyses~\citep{maloney2022solvable,levi2023universalstatisticalstructurescaling,levi2024underlyingscalinglawsuniversal}.  
(ii) The Gamma family in \cref{eq:tau_x_dist_main} is chosen for analytic convenience; all results that govern inference scaling depend only on the \emph{near-zero} tail \(\Pr(\tau_\bx\le t)\asymp t^{\beta/2}\), so other distributions with the same tail index yield the same exponents~\citep{levi2024simplemodelinferencescaling}.  
(iii) Independence of \(\tau_\bx\) and \(\bx\) simplifies exposition; allowing correlation is a natural extension and would primarily affect constants and crossover locations rather than exponents.  
(iv) We will denote the model’s \emph{bias relative to the mean target} on a fresh test feature by \(\Egen(\bx)\coloneqq\bx^\top\hat\btheta-\bx^\top\btheta^{{*}}\); its distribution is governed by the training procedure and sample size \(N\) (see \cref{sec:theory_main}).

\begin{corollary}
The average variance of the target around its mean is
\begin{align}
   \E\!\left[(Y^{{*}}_\bx-\bx^T\btheta^{{*}})^2\right]
= \E_{\bx}\!\left[\Var(Y^{{*}}_\bx\mid\bx)\right]
= \sigma_\eta^2\,\E\!\left[\frac{1}{\tau_\bx}\right]. 
\end{align}
For \(\tau_\bx\sim\G(\beta/2,1)\),
\(\E[1/\tau_\bx]=\frac{\Gamma(\beta/2-1)}{\Gamma(\beta/2)}=\frac{2}{\beta-2}\) when \(\beta>2\).
\end{corollary}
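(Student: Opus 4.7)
The plan is to decompose the expectation using the tower property in two layers, then evaluate the resulting moment of an inverse Gamma random variable by direct integration. First I would condition on both $\bx$ and $\tau_\bx$. By construction in \cref{eq:target_dist_lid}, $Y^{*}_\bx\mid\bx,\tau_\bx\sim\N(\bx^\top\btheta^{*},\sigma_\eta^2/\tau_\bx)$, so the conditional second central moment is exactly the conditional variance: $\E[(Y^{*}_\bx-\bx^\top\btheta^{*})^2\mid\bx,\tau_\bx]=\sigma_\eta^2/\tau_\bx$. Taking expectation over $\tau_\bx$ (which, by assumption in the LID model, is independent of $\bx$) gives $\Var(Y^{*}_\bx\mid\bx)=\sigma_\eta^2\,\E[1/\tau_\bx]$, a quantity that does not depend on $\bx$. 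A final expectation over $\bx$ is therefore trivial and yields the two claimed equalities.

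Next I would compute $\E[1/\tau_\bx]$ for $\tau_\bx\sim\G(\beta/2,1)$, whose density is $f(t)=t^{\beta/2-1}e^{-t}/\Gamma(\beta/2)$ for $t>0$. Then
\begin{equation}
\E[1/\tau_\bx]=\frac{1}{\Gamma(\beta/2)}\int_{0}^{\infty} t^{\beta/2-2}\,e^{-t}\,dt=\frac{\Gamma(\beta/2-1)}{\Gamma(\beta/2)},
\end{equation}
where the integral is recognized as the Gamma function $\Gamma(\beta/2-1)$. Applying the functional identity $\Gamma(z)=(z-1)\,\Gamma(z-1)$ with $z=\beta/2$ reduces the ratio to $1/(\beta/2-1)=2/(\beta-2)$, matching the stated closed form.

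The only subtlety worth flagging is integrability: the integrand $t^{\beta/2-2}e^{-t}$ is integrable at zero iff $\beta/2-2>-1$, i.e., $\beta>2$, which is exactly the stated hypothesis; for $\beta\le 2$ the mean precision $1/\tau_\bx$ has infinite expectation and the average target variance diverges, consistent with the ``heavy-tailed difficulty'' interpretation emphasized in \cref{sec:introduction}. There is no genuine obstacle in this proof — the argument is a routine conditional-variance decomposition followed by one Gamma-function identity — so the main task is simply to state the independence of $\tau_\bx$ and $\bx$ clearly and to record the domain restriction $\beta>2$ under which the equality $\E[1/\tau_\bx]=2/(\beta-2)$ holds.
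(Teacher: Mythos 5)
Your proof is correct and follows the same route the paper implicitly uses: a conditional-variance decomposition over $(\bx,\tau_\bx)$ followed by a direct integral evaluation of $\E[1/\tau_\bx]$ via the Gamma-function recursion $\Gamma(z)=(z-1)\Gamma(z-1)$. The observation that the integral diverges precisely when $\beta\le 2$ is a welcome addition that makes explicit why Assumption~\ref{ass:finite_variance_main} is needed.
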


\begin{assumption}[Finite average target variance]
\label{ass:finite_variance_main}
We assume \(\E[(Y^{{*}}_\bx-\bx^T\btheta^{{*}})^2]<\infty\), i.e., \(\beta>2\).
\end{assumption}

\noindent
Assumption~\ref{ass:finite_variance_main} enables standard high-dimensional ridge/OLS analyses of \(\Ltest\) (training/generalization scaling). Importantly, our inference-time results (pass@$k$ scaling) rely only on the small-\(\tau\) tail and continue to hold in the sense of exponents even when \(\beta\le 2\) (though constants and the onset of asymptotics may change), provided the learned predictor is consistent.

\subsection{Training Setup}

We consider learning the mean relationship \(f^{*}(\bx)=\bx^\top\btheta^{*}\) from a dataset \(\mathcal{D}_N=\{(\bx_i,y_i)\}_{i=1}^N\), where each \(y_i\) is a \emph{single} realization sampled according to Def.~\ref{def:lid_main}. In the fine-tuning view, \(\bx_i\) are frozen features from a pretrained backbone and we train only a linear head. The learner observes \((\bx_i,y_i)\) and estimates \(\hat\btheta\) by minimizing a ridge objective against the \emph{realized} labels while evaluation is always against the \emph{mean} target:
\begin{align}
    \label{eq:train_loss}
    \Ltrain({\hat\btheta})
    \;=\;
    \frac{1}{N}\sum_{i=1}^N (y_i - \bx_i^\top\hat\btheta)^2
    \;+\;
    \lambda \|\hat\btheta\|_2^2,
\end{align}
where \(\hat\btheta\in\mathbb{R}^d\) (last-layer parameters) and \(\lambda\ge 0\) is a small regularizer.

The minimizer admits the standard closed form
\begin{align}
\label{eq:ridge_estimator}
    \hat \btheta_\lambda
    \;=\;
    \arg\min_{\hat\btheta}\Ltrain(\hat\btheta)
    \;=\;
    (N^{-1} X^\top X + \lambda I_d)^{-1} \, N^{-1} X^\top \by,
\end{align}
where \(X\!\in\!\mathbb{R}^{N\times d}\) stacks the features and \(\by\!\in\!\mathbb{R}^N\) the realized labels from \eqref{eq:label_gen_lid_main}.
We will take the ridgeless limit \(\lambda\!\to\!0\) when well-posed; in the overparameterized case \(N<d\) we use the standard scaling \(\lambda=\tilde\lambda/N\) to recover the minimum‑norm interpolator (see \cref{sec:theory_main} and App.\ref{app:ridge_reg}).

\textbf{Bias relative to the mean target.}
Throughout, we evaluate generalization against the \emph{mean} signal \( \bx^\top\btheta^{*}\), and denote the model’s instancewise deviation by
\begin{align}
\Egen(\bx)\ :=\ \bx^\top\hat\btheta_\lambda - \bx^\top\btheta^{*} .
\end{align}
The test (generalization) loss is \(\Ltest(N,\lambda)=\E_{\bx,\mathcal{D}_N}\!\left[\Egen(\bx)^2\right]\).
Under Ass.~\ref{ass:finite_variance_main}, the effective training noise has finite average variance
\(\sigma_{\text{noise}}^2=\sigma_\eta^2\,\E[1/\tau_\bx]=\tfrac{2\sigma_\eta^2}{\beta-2}\),
so standard high‑dimensional ridge/OLS tools apply. In \cref{sec:theory_main} we analyze how \(\Ltest\) scales with \(N\), \(d\), and the feature spectrum exponent \(\alpha\), and how the resulting distribution of \(\Egen(\bx)\) controls the finite‑\(N\) inference behavior (pass@$k$) and the effective inference exponent \(\beta_{\mathrm{eff}}(N)\).

\section{Theoretical Analysis of Scaling Laws}
\label{sec:theory_main}

\begin{figure}[t!]
    \centering
    \includegraphics[width=1\linewidth]{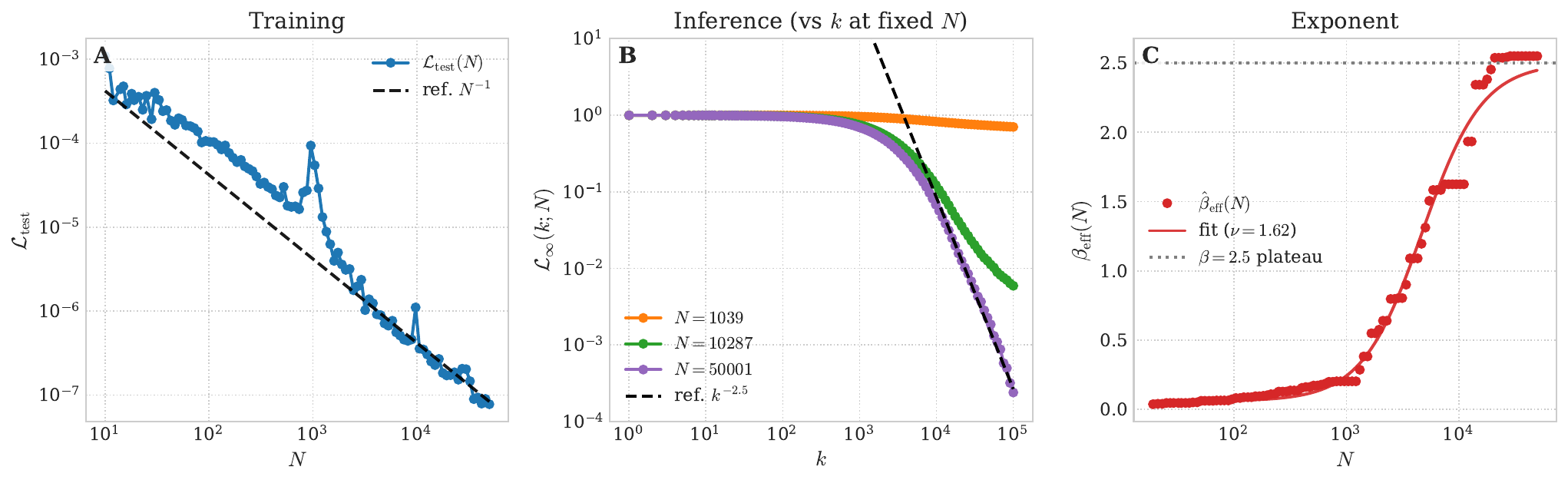}
    \vspace{-15pt}
    \caption{
        {\bf Training and inference-time scaling laws in the LID setting.}
        {\it Left:} Generalization error \(\Ltest\) vs.\ \(N\), showing double descent and the two classical regimes.
        {\it Center:} Pass@\(k\) inference failure rate \(\Linf(k;N)\) vs.\ \(k\) for several \(N>d\), with asymptotic slope \(-\beta\) (dashed black) once the mean is well learned.
        {\it Right:} The effective inference exponent \(\beta_{\mathrm{eff}}(N)\) extracted from the local log–log slope in a fixed \(k\)-window. The dotted line marks the asymptote \(\beta\); the solid curve is the empirical fit \(\beta_{\mathrm{eff}}(N)=\beta - \Delta / (1 + c_\beta  N^{\nu})\).
        We use \(\lambda=10^{-9}\) and \(\sigma_\eta=10^{-3}\).
    }
    \label{fig:double_descent}
    \vspace{-10pt}
\end{figure}

We analyze two coupled laws: (i) the dependence of the generalization loss \(\Ltest\) on sample size \(N\), and (ii) the pass@$k$ inference failure \(\Linf(k;N)\) on the number of trials \(k\).
In our fine-tuning view (last-layer regression on frozen features), \(\Ltest\) controls the distribution of the instancewise deviation
\[
    \Egen(\bx) \coloneqq \bx^\top\hat\btheta_\lambda - \bx^\top\btheta^{*},
\]
which in turn governs the finite-\(N\) behavior of \(\Linf(k;N)\).
As \(N\) grows, \(\Ltest(N)\) shrinks and the \(\Egen(\bx)\)-induced penalty in \(\Linf(k;N)\) recedes; the observed inference slope transitions from a finite-\(N\) regime to the latent-difficulty asymptote \(-\beta\).
We quantify this transition empirically via \(\beta_{\mathrm{eff}}(N)\) (Fig.~\ref{fig:double_descent}, right).

\subsection{Training scaling law \texorpdfstring{$\Ltest$}{L\_test} vs.\ \texorpdfstring{$N$}{N}}
\label{subsec:training_scaling_main}

We evaluate generalization against the mean target, so the test loss is
\begin{equation}
    \label{eq:ltest_def}
    \Ltest(N,\lambda)
    =
    \E_{\bx \sim p(\bx),\, \mathcal{D}_N}\!\Big[
        \bigl(\bx^\top\hat\btheta_\lambda - \bx^\top\btheta^{*}\bigr)^{\!2}
    \Big].
\end{equation}
In high dimensions, \(\Ltest\) depends critically on the \(N\)–\(d\) ratio and on the spectrum of \(\bSigma\); we follow the standard decomposition into regimes~\citep{Belkin_2019, hastie2022surprises}.
Under Assumption~\ref{ass:finite_variance_main}, the effective label noise has finite average variance
\(\sigma_{\text{noise}}^2 = \sigma_\eta^2\,\E[1/\tau_\bx]\),
and classical ridge/OLS results apply (see App.~\ref{app:ridge_reg} for a concise derivation in our notation).

\textbf{Overparameterized regime (\(N<d\)).}
With ridgeless (or lightly regularized) interpolation, the error is governed by the minimum-norm bias and the data spectrum exponent \(\alpha\)~\citep{bartlett2020benign, mei2020generalizationerrorrandomfeatures, hastie2022surprises, maloney2022solvable}.
For power-law spectra \(\sigma_j^2 \propto j^{-(1+\alpha)}\), one obtains
\begin{equation}
    \label{eq:train_scaling_over_main}
    \Ltest(N)
    \;\propto\;
    P_N\, N^{-\alpha},
    \qquad N<d,
\end{equation}
where \(P_N\) is a benign prefactor encapsulating spectrum and teacher alignment.
The symbol \(\propto\) here means that \(\Ltest(N)/\bigl(N^{-\alpha}\bigr)\) remains bounded away from zero and infinity as \(N\to\infty\) along this regime (cf.\ Section~\ref{subsec:notation}).

\textbf{Underparameterized regime (\(N>d\)).}
When samples exceed parameters, the variance term dominates and decays at the parametric rate,
\begin{equation}
    \label{eq:train_scaling_under_main}
    \Ltest(N)
    \;\propto\;
    \sigma_\eta^2\,\E\!\left[\frac{1}{\tau_\bx}\right]\;\frac{d}{N},
    \qquad N\gg d,
\end{equation}
with the usual \(1/N\) slope and a noise-controlled constant reflecting that the mean target is learned from single-shot labels. The precise constant can be expressed in terms of the inverse Marchenko–Pastur law when \(\bx\) are Gaussian with covariance \(\bSigma\) (App.~\ref{app:ridge_reg}).

\textbf{Transition (\(N\approx d\)).}
Near interpolation the estimator is ill-conditioned and \(\Ltest\) exhibits a peak (“double descent”), whose height and width depend on \(\lambda\) and on the spectrum~\citep{Belkin_2019}.

In the overparameterized regime, performance is limited by how quickly the model can resolve low-variance spectral modes of the data, leading to the $N^{-\alpha}$ tail. Once the model has more samples than parameters, the effective noise from single-shot labels dominates and $\Ltest(N)$ becomes variance-limited, decaying as $d/N$. This classical behavior provides the backdrop against which we interpret the training‑dependent changes in inference scaling.

The left panel of Fig.~\ref{fig:double_descent} confirms these scalings in our LID setting and provides the \(\Ltest(N)\) baselines used to interpret the finite-\(N\) inference behavior discussed next.

\subsection{Inference scaling law \texorpdfstring{$\Linf$}{L\_inf} vs.\ \texorpdfstring{$k$}{k}}
\label{subsec:inference_scaling_main}

We analyze inference through the pass@$k$ metric~\citep{snell2024scalingllmtesttimecompute,brown2024largelanguagemonkeysscaling}, i.e., the probability that at least one of \(k\) independent trials matches the target within a fixed tolerance.
In the LID setting, for a test instance \(\bx\) with latent precision \(\tau_\bx\),
\[
    Y^{*}_\bx \sim \N\!\bigl(\bx^\top\btheta^{*},\, \sigma_\eta^2/\tau_\bx\bigr),
\]
and the model outputs \(\hat y=\bx^\top\hat\btheta_\lambda\).
On trial \(j\) we draw \(y_j\sim Y^{*}_\bx\), and the error is
\[
    e_j
    \coloneqq
    \hat y - y_j
    =
    \bigl(\bx^\top\hat\btheta_\lambda - \bx^\top\btheta^{*}\bigr) - \eta_j
    =
    \Egen(\bx) - \eta_j,
    \qquad
    \eta_j \sim \N\!\bigl(0,\,\sigma_\eta^2/\tau_\bx\bigr).
\]
A trial succeeds if \(|e_j|\le\delta\) for a fixed tolerance \(\delta>0\).

\begin{assumption}[Perfect verification]
\label{ass:perfect_verify}
We assume a perfect verifier that declares overall success for a given \(\bx\) iff at least one of the \(k\) trials satisfies \(|e_j|\le\delta\).
\end{assumption}

Let \(p(\bx,\tau_\bx)\) denote the single‑trial failure probability
\[
    p(\bx,\tau_\bx)
    \coloneqq
    \Prob_{\eta \sim \N(0,\sigma_\eta^2/\tau_\bx)}\!\bigl(|\Egen(\bx)-\eta|>\delta\bigr).
\]
Assuming conditional independence across trials, the pass@$k$ failure probability is
\begin{equation}
\label{eq:pass_k_def}
    \Linf(k;N)
    \coloneqq
    \E_{\bx,\,\tau_\bx,\,\mathcal{D}_N}\!\bigl[p(\bx,\tau_\bx)^{\,k}\bigr]
    =
    1 - \text{pass@}k.
\end{equation}

\textbf{Remark (model-draw vs.\ target-draw pass@\texorpdfstring{$k$}{k}).}
In practical pass@$k$ for Large Language Models (LLMs) one draws $k$ \emph{model} outputs and verifies them against a fixed target; here we draw $k$ \emph{target} realizations and compare them to a fixed predictor $\hat y=\bx^\top\hat\btheta_\lambda$.
Under a small tolerance window and smooth local densities, the single-try success probability is proportional to a local density evaluated at the prediction gap $B_N(\bx)\coloneqq\Egen(\bx)$.
Whether the randomness is in the model or in the target then affects only constants, not the asymptotic exponent of the pass@$k$ curve; we formalize this equivalence in App.~\ref{app:inf_fixedN}.

\textbf{Bias-free asymptotic tail.}
We first consider the asymptotic scaling when the mean is learned well enough that the instancewise bias $B_N(\bx)$ is negligible compared to \(\sigma_\eta/\sqrt{\tau_\bx}\) for the small-\(\tau_\bx\) instances that dominate the failures.
In that regime, a small-window expansion in $\delta$ yields
\[
    1 - p(\bx,\tau_\bx)
    =
    c_\delta\,\sqrt{\tau_\bx}\bigl(1+o(1)\bigr)
    \quad\text{as }\delta\downarrow 0,
\]
for a constant \(c_\delta = 2\delta / (\sqrt{2\pi}\,\sigma_\eta)\); see App.~\ref{app:inf_integral} for details.
A Tauberian analysis of the Laplace–Stieltjes transform under the Gamma prior $\tau_\bx\sim\G(\beta/2,1)$ then gives the following formal statement.

\begin{proposition}[Bias-free pass@$k$ scaling]
\label{prop:bias_free}
Suppose Assumption~\ref{ass:finite_variance_main} and the regular-variation and smoothness conditions in App.~\ref{app:tauber_assumptions} hold, and consider the limiting bias-free predictor (or a sequence of predictors for which $B_N(\bx)\to 0$ in distribution).
Then, as $k\to\infty$,
\begin{equation}
\label{eq:inf_scaling_main_final}
    \Linf(k)
    =
    \tilde{P}_{\beta,\delta,\sigma_\eta}\, k^{-\beta}\bigl(1+o(1)\bigr),
    \qquad
    \tilde{P}_{\beta,\delta,\sigma_\eta}
    =
    \frac{2\,\Gamma(\beta)}{\Gamma(\beta/2)\,c_\delta^{\beta}},
\end{equation}
where $c_\delta = 2\delta / (\sqrt{2\pi}\,\sigma_\eta)$.
\end{proposition}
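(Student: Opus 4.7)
The plan is to reduce \(\Linf(k)\) to a one‑dimensional integral over \(\tau_\bx\sim\G(\beta/2,1)\) and extract its large‑\(k\) asymptotic via a Laplace‑method/Tauberian argument. In the bias‑free case the single‑trial failure probability depends only on \(\tau_\bx\), so by conditioning on \(\tau_\bx\)
\[
   \Linf(k)=\int_0^\infty \bigl(1-s(\tau)\bigr)^k\,f_{\beta/2}(\tau)\,d\tau,
   \qquad s(\tau)\;\coloneqq\;1-p(\tau),\qquad f_{\beta/2}(\tau)=\frac{\tau^{\beta/2-1}e^{-\tau}}{\Gamma(\beta/2)}.
\]
The small‑\(\delta\) expansion from App.~\ref{app:inf_integral} gives \(s(\tau)=c_\delta\sqrt{\tau}\,(1+o(1))\) as \(\tau\downarrow 0\), with \(c_\delta=2\delta/(\sqrt{2\pi}\sigma_\eta)\). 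Combined with the fact that \(s(\tau)\) is bounded away from \(0\) on every interval \([\tau_0,\infty)\) (so \((1-s)^k\) decays exponentially there and contributes \(o(k^{-M})\) for every \(M\)), only the neighborhood of \(\tau=0\) can contribute to the leading order, and the problem reduces to a local Laplace‑type estimate near the origin.

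The next step is the change of variables. On \((0,\tau_0)\) use \((1-s)^k=\exp(k\log(1-s))=e^{-ks}\bigl(1+O(ks^2)\bigr)\) and substitute \(u=k\,c_\delta\sqrt{\tau}\), i.e.\ \(\tau=u^2/(kc_\delta)^2\) and \(d\tau=2u/(kc_\delta)^2\,du\). Together with \(\tau^{\beta/2-1}=u^{\beta-2}/(kc_\delta)^{\beta-2}\) and \(e^{-\tau}=1+o(1)\) on the scale \(\tau\asymp k^{-2}\), the integrand becomes
\[
   e^{-u}\,\cdot\,\frac{1}{\Gamma(\beta/2)}\,\left(\frac{u}{kc_\delta}\right)^{\beta-2}\,\cdot\,\frac{2u}{(kc_\delta)^2}\,\bigl(1+o(1)\bigr).
\]
Extending the upper limit to \(+\infty\) (the omitted tail is exponentially small in \(k\)) and recognizing \(\int_0^\infty u^{\beta-1}e^{-u}\,du=\Gamma(\beta)\) yields \(\Linf(k)=\bigl(2\Gamma(\beta)/[\Gamma(\beta/2)c_\delta^{\beta}]\bigr)\,k^{-\beta}(1+o(1))\), which is the stated prefactor. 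Equivalently, one can write \(\Linf(k)=\E[e^{-k s(\tau_\bx)}](1+o(1))\), observe that the CDF of \(s(\tau_\bx)\) near \(0\) is regularly varying with index \(\beta\) (from the small‑\(\tau\) expansion of \(s\) composed with the Gamma density), and invoke Karamata's Tauberian theorem for Laplace transforms to obtain the same constant.

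The main obstacles are purely technical and are what the regular‑variation/smoothness hypotheses in App.~\ref{app:tauber_assumptions} are designed to dispatch. Concretely: (i) the \(o(1)\) in the small‑\(\tau\) expansion of \(s\) must be integrable uniformly on the scale \(\tau\asymp k^{-2}\), which follows from monotonicity of \(s\) near \(0\) and a dominated‑convergence estimate; (ii) the replacement \((1-s)^k\to e^{-ks}\) carries a multiplicative error \(1+O(ks^2)\), and because in the dominant region \(s\asymp u/k\) this is \(O(u^2/k)\), the induced correction to the final integral is \(O(k^{-\beta-1})\), genuinely subleading; and (iii) for the sequence case, \(B_N(\bx)\to 0\) in distribution must translate into the same leading small‑\(\tau\) behavior of the \(\bx\)‑averaged single‑trial failure probability—this is where the smoothness of the local density of the conditional Gaussian at the prediction gap enters, letting one swap the \(\bx\)‑average inside the Tauberian/Laplace step without changing the tail index. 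The result is then the bias‑free exponent \(-\beta\) with the explicit prefactor \(\tilde P_{\beta,\delta,\sigma_\eta}\).
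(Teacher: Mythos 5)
Your proposal is correct and follows essentially the same route as the paper's derivation in App.~\ref{app:inf_integral}: reduce $\Linf(k)$ to a one-dimensional integral over $\tau$, replace $(1-s)^k$ by $e^{-ks}$ with $s\approx c_\delta\sqrt{\tau}$ near zero, substitute $u=kc_\delta\sqrt{\tau}$, and recognize the Gamma integral. You are more explicit than the paper about the $(1-s)^k = e^{-ks}(1+O(ks^2))$ error control, the localization to $\tau\asymp k^{-2}$, and the Karamata/Tauberian reformulation, but the decomposition and the computation producing the prefactor $2\Gamma(\beta)/[\Gamma(\beta/2)c_\delta^\beta]$ are the same.
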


Thus the asymptotic slope \(-\beta\) is governed entirely by the small‑\(\tau_\bx\) tail of the LID prior and is independent of $d/N$.

\textbf{Finite-\texorpdfstring{$N$}{N} correction and training-dependent exponent.}

For finite $N$, the instancewise bias \(B_N(\bx)=\Egen(\bx)\) is non-negligible and suppresses the per‑trial success probability at moderate $\tau_\bx$.
For small \(\delta\), a Gaussian CDF expansion yields
\begin{equation}
\label{eq:success_with_bias}
    1 - p(\bx,\tau_\bx)
    =
    \frac{\sqrt{2}}{\sqrt{\pi}}\frac{\delta}{\sigma_\eta}\,
    \sqrt{\tau_\bx}\,
    \exp\!\Bigl(-\frac{B_N(\bx)^2\,\tau_\bx}{2\sigma_\eta^2}\Bigr)\bigl(1+o(1)\bigr),
\end{equation}
uniformly for $\tau_\bx$ in compact subsets of $(0,\infty)$ as $\delta\downarrow 0$; see App.~\ref{app:tauber_assumptions}.
Under standard linear generalization, $B_N(\bx)$ is approximately Gaussian with $\E[B_N(\bx)]=0$ and
\[
    \Var[B_N(\bx)]
    = \Theta\bigl(\Ltest(N)\bigr).
\]

Using~\eqref{eq:success_with_bias} and Tauberian arguments for Laplace–Stieltjes transforms (detailed in App.~\ref{app:train_dependent}), we obtain a two-tail mixture law for $\Linf(k;N)$.

\begin{theorem}[Two-tail mixture law for pass@$k$]
\label{thm:two_tail}
Let the assumptions in App.~\ref{app:tauber_assumptions} hold, and suppose $B_N(\bx)$ is centered sub-Gaussian with $\Var[B_N(\bx)] \asymp \Ltest(N)$ and independent of $\tau_\bx$.
Then there exist positive constants $\tilde{P}$ and $\tilde{P}_N(N)$ and a function $\gamma(N)>0$ such that, as $k\to\infty$,
\begin{equation}
\label{eq:mixture_passk}
    \Linf(k;N)
    =
    \tilde{P}\,k^{-\beta}
    +
    \tilde{P}_N(N)\,k^{-\gamma(N)}\bigl(1+o(1)\bigr),
\end{equation}
with
\begin{equation}
\label{eq:gamma_vs_ltest}
    \gamma(N)
    = \Theta\!\Bigl(\frac{1}{\Var[B_N(\bx)]}\Bigr)
    = \Theta\!\Bigl(\frac{1}{\Ltest(N)}\Bigr).
\end{equation}
\end{theorem}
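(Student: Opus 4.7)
The plan is to convert $\Linf(k;N)=\E\bigl[(1-s(\bx,\tau_\bx))^{k}\bigr]$ into the Laplace–Stieltjes transform of the distribution of the single-trial success probability $s\coloneqq 1-p$, and then apply Karamata-type Tauberian theorems to translate the near-zero behavior of the CDF $F_{s}$ into the large-$k$ asymptotic of $\Linf$. First I would replace $(1-s)^{k}$ by $e^{-ks}$ via the uniform bound $|(1-s)^{k}-e^{-ks}|\le Cks^{2}e^{-ks}$ on $s\in[0,1]$, reducing $\Linf(k;N)$ to $\int_{0}^{1}e^{-k\epsilon}\,dF_{s}(\epsilon)\,(1+o(1))$ as $k\to\infty$. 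By Karamata's theorem (extended to two-term expansions), any near-zero expansion $F_{s}(\epsilon)=A\epsilon^{a}+B\epsilon^{b}+o(\cdot)$ with $0<a<b$ lifts to a matching two-term large-$k$ expansion of the transform, so the problem reduces to identifying two competing power-law contributions to $F_{s}(\epsilon)$ as $\epsilon\downarrow 0$.

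The two mechanisms are transparent from Eq.~\eqref{eq:success_with_bias}. \emph{Mechanism A (small-$\tau$):} as $\tau_{\bx}\downarrow 0$ the exponential factor tends to $1$, so $s\sim c_{\delta}\sqrt{\tau_{\bx}}$; with $\tau_{\bx}\sim\G(\beta/2,1)$ having density proportional to $\tau^{\beta/2-1}$ near zero, $\Prob(c_{\delta}\sqrt{\tau_{\bx}}<\epsilon)\sim C_{\beta}\epsilon^{\beta}$, which after Tauberian inversion recovers the $\tilde P k^{-\beta}$ intercept of Prop.~\ref{prop:bias_free}. \emph{Mechanism B (bias-driven):} conditional on $\tau_{\bx}=\tau_{0}$, the event $\{s<\epsilon\}$ becomes $\{B_{N}^{2}>(2\sigma_{\eta}^{2}/\tau_{0})\log(c_{\delta}\sqrt{\tau_{0}}/\epsilon)\}$; the sub-Gaussian tail $\Prob(|B_{N}|>b)\asymp e^{-b^{2}/(2V_{N})}$ with $V_{N}=\Var[B_{N}(\bx)]\asymp\Ltest(N)$ then yields $\Prob(s<\epsilon\mid\tau_{\bx}=\tau_{0})\asymp \epsilon^{\sigma_{\eta}^{2}/(V_{N}\tau_{0})}$. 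Evaluated at a characteristic scale $\bar\tau$ controlled by the Gamma prior, this gives the power-law exponent $\gamma(N)=\Theta(\sigma_{\eta}^{2}/(V_{N}\bar\tau))=\Theta(1/\Ltest(N))$ claimed in Eq.~\eqref{eq:gamma_vs_ltest}, with the constant $\tilde P_{N}(N)$ coming from the Gaussian tail prefactor.

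The main technical obstacle is the marginalization over $\tau_{\bx}$ in Mechanism B, because the conditional exponent $\sigma_{\eta}^{2}/(V_{N}\tau_{\bx})$ is itself a function of $\tau_{\bx}$; a naive integral $\int \tau^{\beta/2-1}e^{-\tau}\epsilon^{\sigma_{\eta}^{2}/(V_{N}\tau)}\,d\tau$ is governed by a saddle point at $\tau^{\ast}\asymp\sqrt{(\sigma_{\eta}^{2}/V_{N})\log(1/\epsilon)}$, producing a slowly varying correction rather than a clean power. The cleanest resolution I would pursue is to use the extended Bingham–Goldie–Teugels framework: the bias contribution is regularly varying with index $\gamma(N)$ up to a slowly varying factor $L(1/\epsilon)$, and the Tauberian theorem still converts this into $\tilde P_{N}(N)\,k^{-\gamma(N)}(1+o(1))$ in the sense stated, with the $o(1)$ absorbing the logarithmic correction. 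An alternative sharpening, which avoids slowly varying remainders entirely, is to restrict the $\tau_{\bx}$ integration to a compact window around a representative value $\bar\tau$ (justified by the Gamma prior's exponential upper tail and by truncating the lower tail, which is already captured by Mechanism A) before marginalizing, thereby fixing $\bar\tau$ and producing a genuine power law. Either route, combined with the contribution of Mechanism A, yields Eq.~\eqref{eq:mixture_passk} with the stated exponents.
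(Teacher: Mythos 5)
Your proposal follows essentially the same route as the paper's own proof (App.~\ref{app:train_dependent}): derive the two-tail near-zero behavior of the CDF of the single-trial success probability $S_N$ from the two mechanisms (small-$\tau_\bx$ giving the $\epsilon^{\beta}$ term, large-$|B_N|$ at moderate $\tau_\bx$ giving the $\epsilon^{\gamma(N)}$ term via the sub-Gaussian tail), then invert via Karamata/Laplace--Stieltjes Tauberian theory to obtain~\eqref{eq:mixture_passk}. Your treatment is in fact slightly more careful than the paper's sketch: you explicitly flag the saddle-point issue in marginalizing over $\tau_\bx$ (the conditional exponent $\sigma_\eta^2/(V_N\tau_\bx)$ depends on $\tau_\bx$) and propose the two ways to resolve it (absorb the logarithm into a slowly varying factor under the extended Bingham--Goldie--Teugels framework, or truncate $\tau_\bx$ to a compact window), whereas the paper's proof sketch implicitly does both of these things (restricting $\tau_\bx\in[\tau_{\mathrm{lo}},\tau_{\mathrm{hi}}]$ and deferring to ``slowly varying factors'') without spelling out the obstacle.
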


The first term in~\eqref{eq:mixture_passk} is the intrinsic LID tail governed by $\beta$; the second term is a finite-\(N\) correction arising from large bias $|B_N(\bx)|$ at moderate $\tau_\bx$.

\begin{corollary}[Training‑dependent effective exponent]
\label{cor:beta_eff}
Fix a $k$-window $[k_1,k_2]$ such that for each $N$ either the $k^{-\beta}$ or the $k^{-\gamma(N)}$ term in~\eqref{eq:mixture_passk} dominates over that window.
Then, as $k_1\to\infty$,
\begin{equation}
\label{eq:beta_eff_min}
    \beta_{\mathrm{eff}}(N;[k_1,k_2])
    =
    \min\{\beta,\gamma(N)\}
    + o(1),
\end{equation}
where $\beta_{\mathrm{eff}}(N;[k_1,k_2])$ is defined in~\eqref{eq:beta_eff_def}.
In particular, as $N$ increases and $\Ltest(N)$ decreases, $\gamma(N)$ increases and the effective inference exponent $\beta_{\mathrm{eff}}(N)$ monotonically approaches the intrinsic LID exponent $\beta$.
\end{corollary}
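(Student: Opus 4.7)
The plan is to derive the claim directly from the mixture law \eqref{eq:mixture_passk} by factoring out the dominant power of $k$ and reading off the local log–log slope. Write $\mu(N)\coloneqq\min\{\beta,\gamma(N)\}$ and consider the case $\mu(N)=\beta<\gamma(N)$; factoring yields
\[
    \Linf(k;N) \;=\; \tilde{P}\,k^{-\beta}\Big[\,1 + \tfrac{\tilde{P}_N(N)}{\tilde{P}}\,k^{-(\gamma(N)-\beta)}(1+o(1))\Big].
\]
In the reverse case $\mu(N)=\gamma(N)<\beta$ the roles of the two terms swap, and in either case the window-dominance hypothesis of the corollary ensures that the bracketed factor has the form $1+\epsilon(k;N)$ with $\epsilon(k;N)\to 0$ as $k\to\infty$ on $[k_1,k_2]$.

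Next I would apply the log–log slope functional from \eqref{eq:beta_eff_def}, i.e.\
\[
    \beta_{\mathrm{eff}}(N;[k_1,k_2]) \;=\; -\frac{\log\Linf(k_2;N)-\log\Linf(k_1;N)}{\log k_2-\log k_1},
\]
to the factored expression. The leading $-\mu(N)\log k$ piece produces exactly $\mu(N)$ after differencing and dividing by $\log(k_2/k_1)$, while the expansion $\log(1+\epsilon(k;N))=\epsilon(k;N)+O(\epsilon(k;N)^2)$ shows that the residual contributes at most $\max\{|\epsilon(k_1;N)|,|\epsilon(k_2;N)|\}/\log(k_2/k_1)$, which is $o(1)$ as $k_1\to\infty$ provided $\log(k_2/k_1)$ stays bounded away from zero. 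The $(1+o(1))$ factor inherited from Theorem~\ref{thm:two_tail} contributes analogously. For the monotonicity and saturation statement, I would invoke \eqref{eq:gamma_vs_ltest} together with the training scalings $\Ltest(N)\propto N^{-\alpha}$ and $\Ltest(N)\propto d/N$ from \cref{subsec:training_scaling_main}: $\gamma(N)\asymp 1/\Ltest(N)$ grows monotonically in $N$ and eventually exceeds $\beta$, pinning $\mu(N)$ to $\beta$ and producing the plateau visible in the right panel of Figure~\ref{fig:double_descent}.

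The main obstacle is handling the crossover region $\gamma(N)\approx\beta$, where the two power-law terms are comparable over the entire window and the factorization becomes degenerate: then $\epsilon(k;N)=\Theta(1)$ and the slope is not well approximated by either exponent alone. The window-dominance hypothesis is precisely what rules this out, and an honest proof should either phrase this as a quantitative gap condition (e.g.\ $|\gamma(N)-\beta|\,\log k_1\to\infty$) or formulate the conclusion as a limit along sequences $(N,k_1,k_2)$ for which one term is asymptotically negligible relative to the other. A secondary subtlety is that the $o(1)$ in Theorem~\ref{thm:two_tail} is a $k\to\infty$ statement for each fixed $N$; to interchange the order of limits when $N$ itself varies, one needs uniformity of the Tauberian remainder over the relevant range of $\gamma(N)$, which I would justify by the sub-Gaussian tail control of $B_N(\bx)$ posited in App.~\ref{app:tauber_assumptions}.
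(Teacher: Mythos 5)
Your proposal is correct and follows essentially the same route as the paper's own argument (App.~\ref{app:beta_eff_proof}), which simply substitutes the mixture law and asserts that after letting $k_1,k_2\to\infty$ with $k_2/k_1$ fixed, the dominant term determines the slope. Your explicit factorization $\Linf=\tilde P\,k^{-\mu(N)}[1+\epsilon(k;N)]$, the $\log(1+\epsilon)$ expansion, and your caveats about the crossover region $\gamma(N)\approx\beta$ (excluded only by the dominance hypothesis) and about uniformity of the Tauberian $o(1)$ across $N$ are precisely the details the paper leaves implicit, and they hold up.
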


The two exponents $\beta$ and $\gamma(N)$ play distinct roles. The intrinsic exponent $\beta$ comes solely from the heavy tail of the instance difficulty distribution and cannot be improved by additional training once the mean target has been learned. By contrast, $\gamma(N)$ encodes a finite-$N$ penalty: when the model’s mean error is still large, many test points behave as if they were ``effectively hard'' even if their latent $\tau_\bx$ is moderate. As training drives down $\Ltest(N)$, this artificial hard tail disappears, and the observed pass@$k$ curves steepen until $\beta_{\mathrm{eff}}(N)$ is limited only by $\beta$.

In practice we summarize this monotone saturation by the empirical fit
\begin{equation}
\label{eq:beta_eff_fit}
    \beta_{\mathrm{eff}}(N)
    =
    \beta - \frac{\Delta}{1 + c_\beta \,N^{\nu}},
\end{equation}
where \((\Delta,c_\beta ,\nu)\) depend weakly on the \(k\)-window and on the estimator.
The center and right panels of Fig.~\ref{fig:double_descent} illustrate: (i) for fixed \(N>d\), \(\Linf(k;N)\) exhibits a slope that steepens with \(N\); (ii) \(\beta_{\mathrm{eff}}(N)\) increases with \(N\) and plateaus at \(\beta\).

\subsection{Compute allocation tradeoff with training-dependent \texorpdfstring{$\beta_{\mathrm{eff}}(N)$}{beta\_eff(N)}}
\label{sec:tradeoff_main}

\begin{figure}[t!]
    \centering
    \includegraphics[width=.85\linewidth]{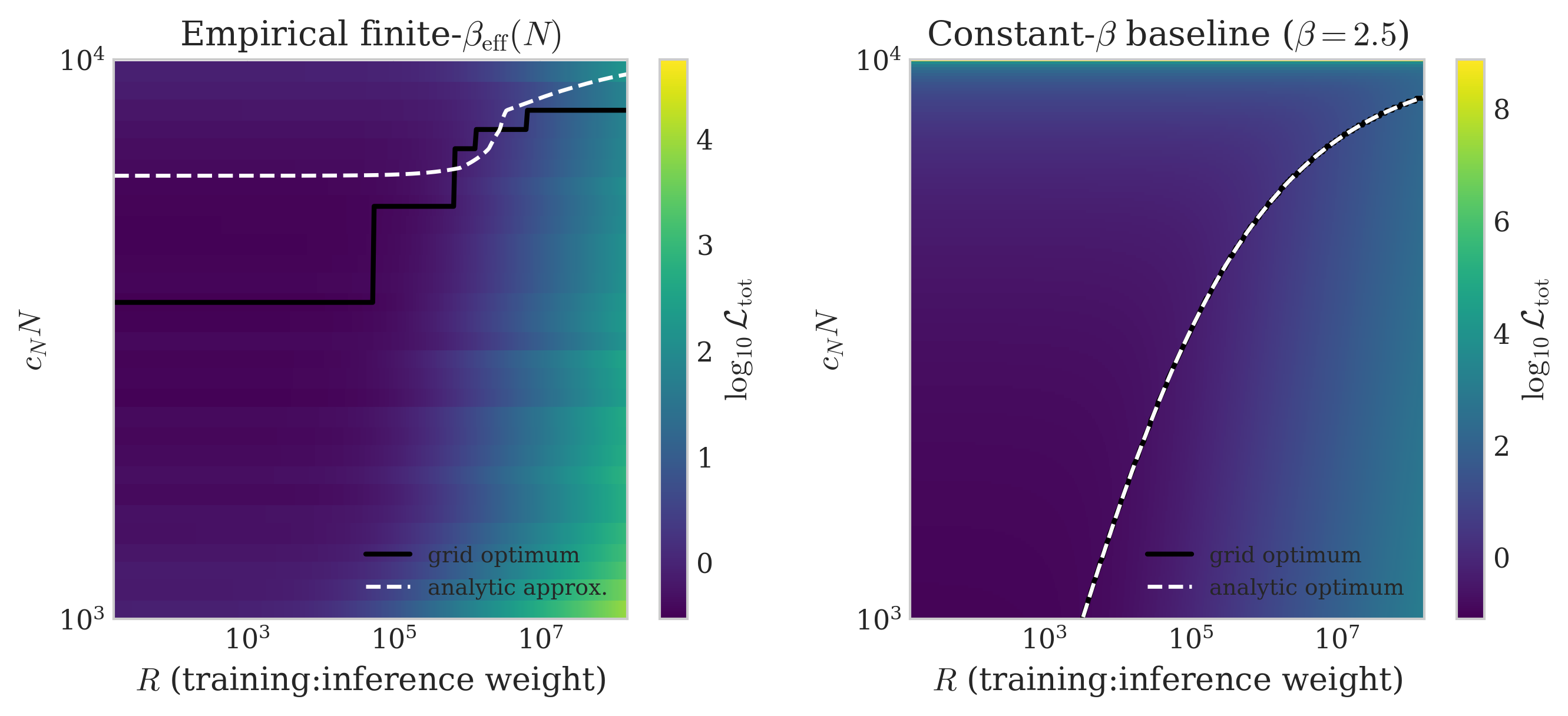}
    \caption{\textbf{Compute allocation with a training-dependent inference exponent.}
{\it Left:} empirical finite-$N$ case.
We plot $\log_{10}\mathcal{L}_{\mathrm{tot}}(N,k;R)$ with $R$ the training:inference weight ratio and $C=Nc_N+kc_k$.
Black: grid optimum $\tilde N_\star$; white dashed: analytic approximation from~\eqref{eq:opt_cond_general}, using
$\beta_{\mathrm{eff}}(N)$ (local log--log slope of $\Linf$) and its discrete derivative.
{\it Right:} constant-$\beta$ baseline calibrated from the same data; white dashed is the closed-form analytic optimum.
Contours clearly shift toward larger $N$ in the finite-$N$ panel, consistent with the $-\beta_{\mathrm{eff}}'(N)\log(\cdot)$ correction.}
\label{fig:tradeoff}
    \vspace{-10pt}
\end{figure}

Having characterized the finite-$N$ inference scaling, we can combine training and inference laws to study the optimal budget allocation between the two.

We consider a fixed compute budget \(C\) that must be split between training samples (cost \(c_N\) each) and inference trials (cost \(c_k\) each) $C = N c_N + k c_k$, s.t. $k = \frac{C - \tilde N}{c_k}, \tilde N := N c_N.$ We minimize a weighted objective
\begin{equation}
    \mathcal{L}_{\mathrm{tot}}(\tilde N)
    =
    R\,\Ltest(N)
    +
    \Linf\!\Bigl(k=\tfrac{C-\tilde N}{c_k}\,;\,N\Bigr),
    \qquad
    \text{training/inference weight}=R>0.
\end{equation}
In the classical (under‑parameterized) regime \(N\gg d\) we have \(\Ltest(N) \propto P_N N^{-\gamma}\) with \(\gamma=1\) (Sec.~\ref{subsec:training_scaling_main}); more generally one may take \(\gamma\in\{1,\alpha\}\) depending on $d/N$.
For inference, Theorem~\ref{thm:two_tail} suggests modeling $\Linf(k;N)$ over the practical $k$-window by
\begin{equation}
    \Linf(k;N)
    \approx
    \tilde P(N)\,k^{-\beta_{\mathrm{eff}}(N)},
    \qquad
    \beta_{\mathrm{eff}}(N) \uparrow \beta \ \text{as}\ N\to\infty,
\end{equation}
where $\tilde P(N)$ is a slowly varying prefactor capturing residual bias effects.
Here ``$\approx$'' is to be understood in the sense of an empirical fit over a fixed $k$-window, not as an asymptotic statement; the asymptotic behavior as $k\to\infty$ is given by Theorem~\ref{thm:two_tail}.

With \(\tilde N=N c_N\) and \(k=(C-\tilde N)/c_k\), the budget‑constrained objective becomes
\begin{equation}
\label{eq:Ltot_betaeff}
    \mathcal{L}_{\mathrm{tot}}(\tilde N)
    =
    R\,P_N\,c_N^{\gamma}\,\tilde N^{-\gamma}
    +
    \tilde P(N)\,c_k^{\beta_{\mathrm{eff}}(N)}\,(C-\tilde N)^{-\beta_{\mathrm{eff}}(N)}.
\end{equation}

\begin{proposition}[Optimal allocation with training-dependent $\beta_\mathrm{eff}(N)$]
\label{prop:compute_allocation_betaeff}
Assume \(\Ltest(N) \propto P_N N^{-\gamma}\) and that over the practical $k$-window the pass@$k$ failure satisfies
$\Linf(k;N)\approx\tilde P(N)k^{-\beta_{\mathrm{eff}}(N)}$ with $\beta_{\mathrm{eff}}(N)$ differentiable in $N$.
Let \(N=\tilde N/c_N\).
Any interior optimum \(\tilde N_\ast\in(0,C)\) of $\mathcal{L}_{\mathrm{tot}}(\tilde N)$ satisfies
\begin{equation}
\label{eq:opt_cond_general}
\resizebox{0.9 \linewidth}{!}{$
    R\,P_N\,c_N^{\gamma}\,\gamma\,\tilde N^{-(\gamma+1)}
    =
    \tilde P(N)\,c_k^{\beta_{\mathrm{eff}}(N)}\,(C-\tilde N)^{-\beta_{\mathrm{eff}}(N)}
    \left[
        \frac{\beta_{\mathrm{eff}}(N)}{C-\tilde N}
        -
        \frac{\beta_{\mathrm{eff}}'(N)}{c_N}\,
        \ln\!\Bigl(\frac{C-\tilde N}{c_k}\Bigr)
    \right].
    $}
\end{equation}
If \(\tilde P(N)\) is slowly varying and \(|\beta_{\mathrm{eff}}'(N)|\) is small over the relevant range, \eqref{eq:opt_cond_general} simplifies to the quasi‑static balance
\begin{equation}
\label{eq:opt_cond_quasistatic}
    R\,P_N\,c_N^{\gamma}\,\gamma\,\tilde N^{-(\gamma+1)}
    \approx
    \tilde P \,c_k^{\beta}\,\beta\,(C-\tilde N)^{-(\beta+1)},
\end{equation}
where $\tilde P$ is an $N$-independent constant and the approximation is understood in the sense that the two sides match up to slowly varying multiplicative factors.
\end{proposition}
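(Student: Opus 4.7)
The plan is to obtain the first order condition by differentiating $\mathcal{L}_{\mathrm{tot}}(\tilde N)$ in \eqref{eq:Ltot_betaeff} directly with respect to $\tilde N$, treating $N=\tilde N/c_N$ as a dependent variable through the chain rule. The training term is a pure power in $\tilde N$ (its $N$-dependence is absorbed into $P_N$, which I will treat as a slowly varying prefactor in the same sense used for $\tilde P(N)$), so its derivative gives $-\gamma R P_N c_N^\gamma \tilde N^{-(\gamma+1)}$. The inference term is the interesting one because the exponent $\beta_{\mathrm{eff}}(N)$ itself depends on $N$, so I will use logarithmic differentiation to organize the book-keeping and then exponentiate back.

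Concretely, writing $f(\tilde N)=\tilde P(N)\,c_k^{\beta_{\mathrm{eff}}(N)}(C-\tilde N)^{-\beta_{\mathrm{eff}}(N)}$ and taking logs gives
\begin{equation*}
\ln f(\tilde N)=\ln\tilde P(N)+\beta_{\mathrm{eff}}(N)\ln c_k-\beta_{\mathrm{eff}}(N)\ln(C-\tilde N).
\end{equation*}
Differentiating with respect to $\tilde N$ and using $dN/d\tilde N=1/c_N$ yields
\begin{equation*}
\frac{f'(\tilde N)}{f(\tilde N)}=\frac{\beta_{\mathrm{eff}}(N)}{C-\tilde N}-\frac{\beta_{\mathrm{eff}}'(N)}{c_N}\ln\!\Bigl(\frac{C-\tilde N}{c_k}\Bigr)+\frac{\tilde P'(N)}{c_N\tilde P(N)}.
\end{equation*}
Setting $\mathcal{L}_{\mathrm{tot}}'(\tilde N_\ast)=0$ for an interior optimum balances the negative training derivative against $f'(\tilde N_\ast)$; dropping the $\tilde P'(N)/\tilde P(N)$ contribution under the stated slowly varying assumption on $\tilde P$ reproduces \eqref{eq:opt_cond_general}. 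The derivation is just careful bookkeeping, with the only subtle point being that the $\beta_{\mathrm{eff}}'(N)$ term carries a factor of $\ln((C-\tilde N)/c_k)=\ln k$ that is not small even when $\beta_{\mathrm{eff}}'(N)$ is, so one must keep it in the general statement.

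To obtain the quasi-static balance \eqref{eq:opt_cond_quasistatic}, I would argue in two steps: first, if $|\beta_{\mathrm{eff}}'(N)|\ln k$ is small compared to $\beta_{\mathrm{eff}}(N)/(C-\tilde N)\cdot(C-\tilde N)=\beta_{\mathrm{eff}}(N)$ over the relevant range, the bracket in \eqref{eq:opt_cond_general} reduces to $\beta_{\mathrm{eff}}(N)/(C-\tilde N)$; second, if additionally $N$ is large enough that $\beta_{\mathrm{eff}}(N)\approx\beta$ by Corollary \ref{cor:beta_eff} and $\tilde P(N)\to\tilde P$, substitution yields \eqref{eq:opt_cond_quasistatic}, with the approximation interpreted as equality up to slowly varying multiplicative factors. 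The only genuine obstacle is justifying that the $\ln k$ enhancement in the correction term does not destabilize the quasi-static limit; this is handled by the explicit saturation rate in \eqref{eq:beta_eff_fit}, which gives $\beta_{\mathrm{eff}}'(N)=O(N^{-(1+\nu)})$ and therefore $\beta_{\mathrm{eff}}'(N)\ln k\to 0$ whenever $k$ grows no faster than a polynomial in $N$, a mild condition that holds throughout the compute-budgeted regime $C=\Theta(N c_N)$.
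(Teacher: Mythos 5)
Your proof is correct and follows the natural route of imposing the first-order condition $\mathcal{L}'_{\mathrm{tot}}(\tilde N_\ast)=0$ via logarithmic differentiation of the inference term, with the chain rule $dN/d\tilde N=1/c_N$. The paper does not spell out a proof for this proposition, and your derivation---including correctly identifying that \eqref{eq:opt_cond_general} tacitly drops the $\tilde P'(N)/\tilde P(N)$ contribution, and justifying via \eqref{eq:beta_eff_fit} that the $\beta_{\mathrm{eff}}'(N)\ln k$ correction vanishes in the budget-constrained regime---is precisely the intended argument.
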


Compared to the constant‑\(\beta\) condition, \eqref{eq:opt_cond_general} includes a \emph{new logarithmic term} proportional to \(-\beta_{\mathrm{eff}}'(N)\ln\!\bigl((C-\tilde N)/c_k\bigr)\).
When the budget is such that \(k=(C-\tilde N)/c_k\) lies below the LID‑dominated window (so \(\beta_{\mathrm{eff}}'(N)>0\) and \(\ln((C-\tilde N)/c_k)>0\)), this term increases the marginal benefit of training, shifting the optimum towards larger \(N\).
Once \(\beta_{\mathrm{eff}}(N)\) has saturated (so \(\beta_{\mathrm{eff}}'(N)\approx0\)), \eqref{eq:opt_cond_general} reduces to the constant‑exponent balance in \eqref{eq:opt_cond_quasistatic} (recovering the classical tradeoff with \(\beta\) replaced by \(\beta_{\mathrm{eff}}(N)\)).

In particular:
\begin{itemize}
\item \emph{Under‑parameterized, near saturation.}
    With \(\gamma=1\), $\beta_{\mathrm{eff}}'(N)\approx0$ and slowly varying \(\tilde P(N)\), \eqref{eq:opt_cond_quasistatic} gives an accurate allocation rule: allocate training until the marginal $N$-gain $ \propto \tilde N^{-2}$ matches the marginal $k$-gain $\propto (C-\tilde N)^{-(\beta_{\mathrm{eff}}+2)}$.
\item \emph{Finite‑\(N\), sub‑asymptotic $k$.}
    When \(\beta_{\mathrm{eff}}(N)\) is still increasing, the \(-\beta_{\mathrm{eff}}'(N)\log(\cdot)\) term makes additional training strictly more valuable than the quasi‑static approximation predicts; optimal policies invest more in \(N\) until the effective slope stabilizes.
\end{itemize}
This shift toward larger $N$ in the finite-$N$ regime is evident in Fig.~\ref{fig:tradeoff}.

\section{Empirical Results on Real World Applications}
\label{sec:exp_real}

To bridge the LID model with realistic settings, we perform two experiments: we train a linear classifier on features extracted from CIFAR‑10H, and perform a teacher–student distillation experiment on reasoning tasks using auto-regressive LLMs. We discuss the results of these experiments in the following sections, with additional details provided in Apps.~\ref{app:cifar10h_details} and \ref{app:gsm8k_details}. 

\begin{figure}[t!]
\centering
\includegraphics[width=.9\linewidth]{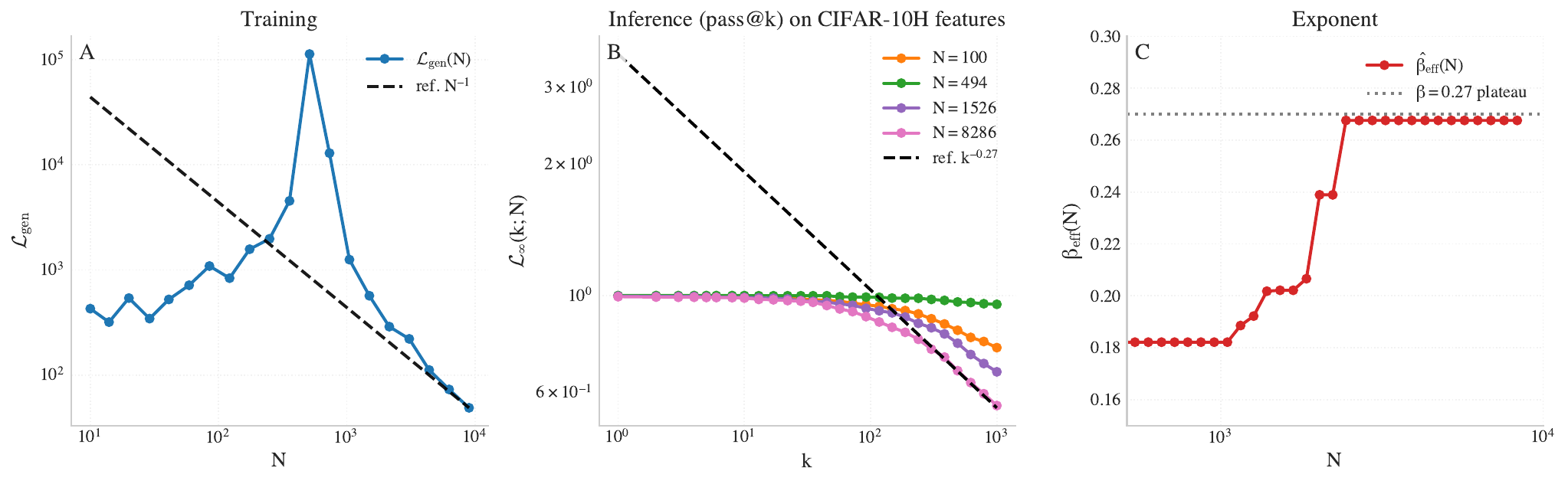}
\includegraphics[width=.9\linewidth]{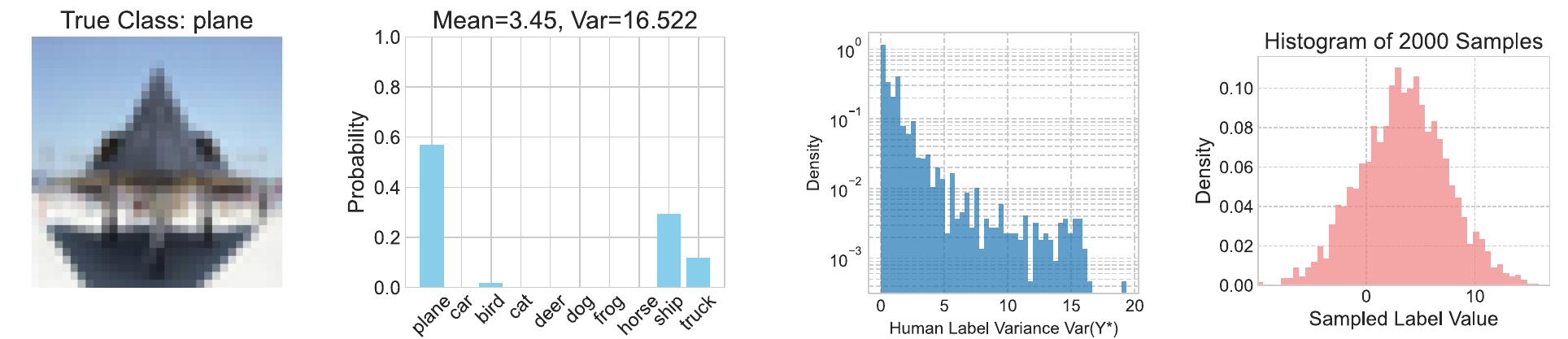}
\caption{
\textbf{Training and inference scaling on CIFAR-10H (frozen backbone, linear head).}
\textbf{Top row:}
\textit{Left:} Generalization loss $\Ltest(N)$ with the $N^{-1}$ tail in the classical regime.
\textit{Center:} Pass@$k$ failure $\Linf(k;N)$ for several $N$ on CIFAR-10H; dashed shows a $k^{-\beta}$ reference anchored on the largest-$N$ curve.
\textit{Right:} Effective inference slope $\beta_{\mathrm{eff}}(N)$ estimated from the local log–log slope; the saturating fit $\beta_{\mathrm{eff}}(N)=\beta-\Delta/(1+c_\beta N^\nu)$ is overlaid, approaching the intrinsic tail index $\beta$.
\textbf{Bottom row:}
A CIFAR‑10 image, its human label distribution, and the Gaussian PDF used to sample training/inference labels, illustrating instance difficulty via label variance.
}
\label{fig:cifar10h}
\vspace{-10pt} 
\end{figure}

\subsection{Test Case I: LID in CIFAR-10H with pre-trained features}
\label{sec:exp_cifar10h}

Here, we validate the LID predictions on CIFAR‑10~\citep{cifar10} paired with human label distributions from the annotated CIFAR‑10H dataset~\citep{peterson2019humanuncertaintymakesclassification}. 
 We \emph{freeze} a pretrained ResNet‑18~\citep{he2015deep} and \emph{fine‑tune} only a linear head on top of the backbone features, treating human disagreement as instance‑dependent noise (high variance $\Rightarrow$ low precision), matching the LID setting. Full protocol and implementation details are in App.~\ref{app:cifar10h_details}.

\cref{fig:cifar10h} (top) shows $\Ltest$ vs. $N$ and $\Linf(\cdot;N)$ vs. $k$\footnote{Here, $s=32$ and $\delta=0.05$ following the notations in App.~\ref{app:cifar10h_details}.}. The generalization curve exhibits the familiar transition near $N\!\approx\!d$ and the $1/N$ decay for $N\!\gg\!d$, consistent with our linear‑ridge analysis. For inference, each fixed‑$N$ curve follows an approximate power law $k^{-\beta_{\mathrm{eff}}(N)}$; we estimate $\beta_{\mathrm{eff}}(N)$ as the local log–log slope in a central $k$ window. As predicted by Sec.~\ref{subsec:inference_scaling_main}, $\beta_{\mathrm{eff}}(N)$ \emph{increases with $N$} and \emph{saturates}, reflecting the crossover from a bias‑limited window (finite‑$N$ mean error) to the intrinsic LID‑dominated tail.

The bottom row illustrates how human disagreement induces instance difficulty: broad label histograms (large instance noise variance) produce wider Gaussians in \eqref{eq:cifar_lid_sampling_rescaled}, making pass@$k$ success rarer at small $k$. This matches the mechanism behind the $k^{-\beta}$ law in the synthetic LID model (Sec.~\ref{subsec:inference_scaling_main}), with the caveat that the empirical difficulty distribution is not exactly Gamma; nevertheless, the slope behavior is robust and governed by the prevalence of high‑variance instances.

\textbf{Takeaways.}
(i) The linear‑head fine‑tuning setting reproduces the $\Ltest$ scaling predicted by the ridge analysis.
(ii) The pass@$k$ failure curves exhibit log–log linearity with a slope $\beta_{\mathrm{eff}}(N)$ that grows with training data and plateaus, aligning with the finite‑$N$ theory and supporting the compute‑allocation results in Sec.~\ref{sec:tradeoff_main}.
(iii) Treating human uncertainty as instance‑dependent noise provides a realistic testbed for LID: the qualitative scaling and the training‑dependent inference exponent persist despite deviations from the idealized Gamma prior or independence assumptions between $\tau_{\bx}$ and $\bx$.

\subsection{Test Case II: GSM8K Teacher–Student Distillation}
\label{sec:exp_gsm8k}

We complement our CIFAR‑10H experiment with a small GSM8K~\citep{cobbe2021trainingverifierssolvemath} distillation task designed to probe the LID training–inference coupling. A \texttt{Flan‑T5‑XL}~\citep{chung2022scalinginstructionfinetunedlanguagemodels} teacher produces solution traces on GSM8K‑train; a \texttt{Flan‑T5‑small} student is fine‑tuned with LoRA~\citep{hu2021loralowrankadaptationlarge} ($r{=}8$) on subsets of sizes $N\!\in[10,\,6309]$ (one teacher sample per question). We evaluate on GSM8K‑test using two metrics: (i) a \emph{strict greedy} proxy for training loss that decodes a single student answer, parses only explicit ``final answer'' numerics, and reports valid‑only MSE (with $1\%$ Winsorization) at tolerance $\delta\!=\!10^{-3}$; and (ii) the pass@$k$ failure $\Linf(k;N)$ together with the effective slope $\hat{\beta}_{\mathrm{eff}}(N)$ estimated from a central $k$‑window.

\cref{fig:gsm8k_triple} shows that the strict greedy loss decreases modestly with $N$\footnote{While the test loss is unstable due to large numerical mismatches (some final answers are simply larger numbers than others), the overall trend decreases and the inference metrics are stable. }, while $\Linf(k;N)$ steepens and $\hat{\beta}_{\mathrm{eff}}(N)$ increases then saturates, which is precisely the two‑tail behavior predicted by the LID model (\S\ref{subsec:inference_scaling_main}, App.~\ref{app:train_dependent}). Full setup and evaluation details appear in App.~\ref{app:gsm8k_details}.

\begin{figure}[t!]
    \centering
    \includegraphics[width=.97\linewidth]{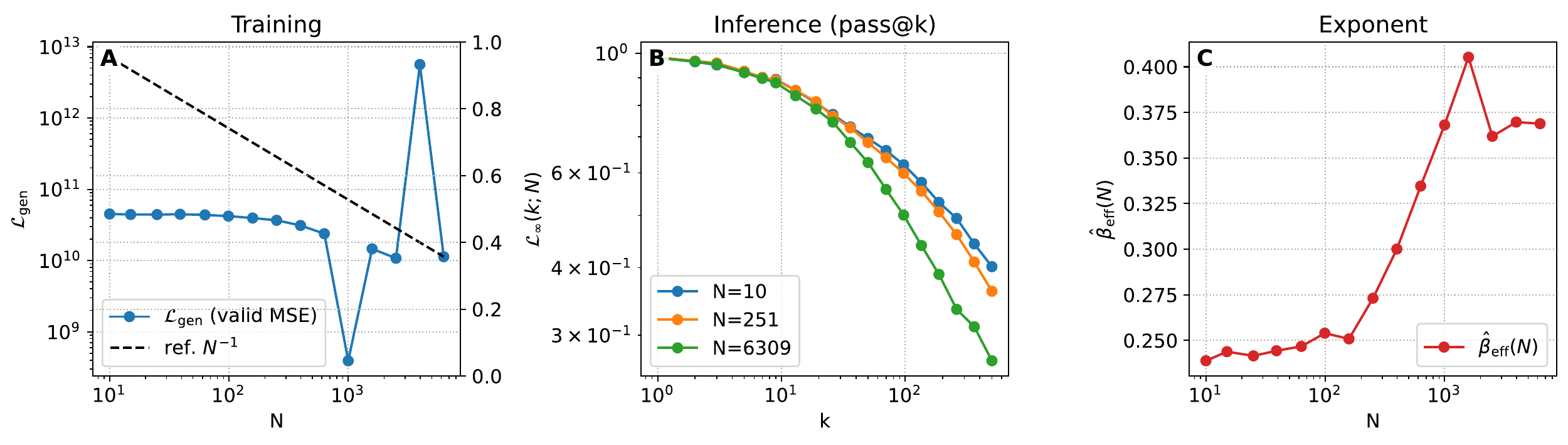}
        \vspace{-5pt}
    \caption{\textbf{GSM8K distillation (teacher$\to$student).}
    \textit{Left:} strict greedy valid‑only MSE vs.\ $N$ (1\% Winsorized), with a $N^{-1}$ reference.
    \textit{Center:} $\Linf(k;N)$ vs.\ $k$ for three $N$ values.
    \textit{Right:} effective inference slope $\hat{\beta}_{\mathrm{eff}}(N)$, which increases with $N$ and plateaus, consistent with the LID two‑tail law.}
    \label{fig:gsm8k_triple}
    \vspace{-14pt}
\end{figure}

\vspace{-5pt}
\section{Discussion and Conclusion}
\label{sec:conclusion_main}
\vspace{-5pt}

This work introduced the Latent Instance Difficulty (LID) model, a simple, solvable framework for last-layer fine-tuning that unifies the scaling laws of training and inference. We modeled tasks with intrinsic, instance-heterogeneous difficulty and showed that while the generalization loss, \(\Ltest\), follows established scaling with sample size \(N\) and data spectrum \(\alpha\), its improvement has a direct and non-trivial impact on inference performance.

Our central contribution is the derivation of a \textbf{training-dependent inference exponent}. The pass@$k$ failure rate, \(\Linf(k)\), decays as a power law, but its exponent, \(\beta_{\mathrm{eff}}(N)\), is not fixed. It begins small for poorly-trained models and grows with the number of training samples \(N\), eventually saturating at an intrinsic limit, \(\beta\), determined by the tail of the task's true difficulty distribution. This mechanism reveals how reducing the model's error relative to the mean target makes inference-time compute more effective, up to a point of diminishing returns set by the data's irreducible stochasticity.

This unified view yields actionable insights. It predicts a clear crossover in the optimal resource allocation strategy: when the model is undertrained and \(\beta_{\mathrm{eff}}(N) < \beta\), the marginal benefit of acquiring more training data is high. Once the model is well-trained and the inference exponent has saturated, further gains are best sought by investing in more inference-time compute.

The LID model, while simple, provides a valuable theoretical baseline. It cleanly separates the roles of data structure (\(\alpha\)) and data heterogeneity (\(\beta\)) while also explaining how they are coupled through the training process. By providing a closed-form, testable theory for when and how much test-time compute should help, it offers a first step toward a more principled understanding of resource allocation in modern machine learning.

Although we instantiate LID in last‑layer fine‑tuning with scalar regression, the core mechanism: pass@\(k\) failure mixes an \emph{intrinsic} difficulty tail (index \(\beta\)) with a \emph{training‑dependent} tail controlled by \(\Ltest(N)\), is model‑agnostic. Beyond fine‑tuning, training from scratch can be analyzed by allowing the representation to evolve, effectively changing the feature spectrum (e.g., the decay exponent \(\alpha\)) and thus the rate at which the finite‑\(N\) tail collapses; the saturation level may then be architecture‑dependent. For classification, instance difficulty can be identified with label ambiguity/noise, and pass@\(k\) is the event that at least one of \(k\) draws hits the correct class, yielding the same two‑tail mixture and a saturating \(\beta_{\mathrm{eff}}(N)\). In reinforcement learning, difficulty corresponds to return variance, so pass@\(k\) is the chance that \(k\) rollouts contain a near‑optimal return; again the LID predictions carry over. For LLM reasoning, increasing pre‑training tokens reduces mean error and empirically steepens the pass@\(k\) slope; we conjecture \(\beta_{\mathrm{eff}}(N)\) grows with scale and saturates at an intrinsic difficulty index, with the same observable signature: as training improves, pass@\(k\) curves steepen and then plateau.

\textbf{Limitations.}
We analyze a linear last‑layer model for clarity; extending the theory to multi‑layer, non‑linear architectures is an important next step, with random‑feature or kernel models a natural bridge. We also treat the intrinsic difficulty distribution as task‑fixed; for complex LLM reasoning, stronger models may both reduce bias and effectively ``simplify'' the task, shifting the apparent tail index $\beta$. Thus our results should be read as describing, for a \emph{fixed} architecture, how $\beta_{\mathrm{eff}}(N)$ rises toward an architecture‑dependent plateau; quantifying how that plateau varies across models remains open. Finally, we study single‑output regression; carrying the LID perspective on training–inference coupling to fully structured, auto‑regressive outputs (as in LLMs) is a key challenge for future work.

\section{Acknowledgements}

We thank Yohai Bar-Sinai, Alon Beck, Francesco Cagnetta, Joshua Kazdan, Nadav Outmezguine and Zohar Ringel for fruitful discussions.
NL is supported by the EPFL AI4science/AI Center program.


\bibliography{bib_scaling}
\bibliographystyle{unsrtnat}

\newpage


\appendix

\section{Related Work}
\label{app:sec:related_work}

\textbf{Generalization Scaling Laws}
Neural scaling laws (NSL) have been shown to accurately describe how the generalization loss of deep neural networks improves with scale (model/dataset size, compute)~\citep{hestness2017deep, kaplan2020scaling, hoffmann2022training}. Language model cross-entropy loss, for instance, often exhibits power-law scaling over orders of magnitude~\citep{kaplan2020scaling, henighan2020scalinglawsautoregressivegenerative}, suggesting quantifiable performance gains with increased resources. Theoretical frameworks aim to explain these observations by identifying distinct scaling regimes (e.g., variance-limited, resolution-limited)~\citep{bahri2021explaining, sharma2020neural, Spigler_2019}, sometimes linking them to data manifold intrinsic dimension~\citep{sharma2020neural, bahri2021explaining}. More nuanced models address dynamical scaling evolution with training time~\citep{bordelon2024dynamicalmodelneuralscaling} and ``broken'' neural scaling laws (BNSL) that capture transitions between power-law regimes~\citep{caballero2023brokenneuralscalinglaws, nakkiran2021deep}. Such breaks highlight complexities in extrapolation and suggest plateaus might stem from suboptimal strategies rather than fundamental limits~\citep{sorscher2022beyond, dey2025dontlazycompletepenables}. Beyond pre-training, scaling laws also govern fine-tuning performance relative to pre-trained model size and fine-tuning data volume~\citep{hernandez2021scaling, lin2024selectinglargelanguagemodel}. These studies show pre-training effectively augments fine-tuning data, with transfer benefits following predictable, though task-dependent, patterns~\citep{hernandez2021scaling}, shifting focus towards leveraging pre-training for diverse downstream applications.
In this work, we mainly focus on the vanilla NSL setting where the scaling law pertains to generalization loss improvement with increased number of training samples.

\textbf{Inference Time Scaling} 

Methods for scaling inference-time compute in deep learning often involve generating multiple solution candidates and then selecting the best one based on specific criteria.
These criteria include choosing the most frequent response for majority voting or the best response based on an external reward for Best-of-N~\citep{brown2024largelanguagemonkeysscaling, irvine2023rewardingchatbotsrealworldengagement, levi2024simplemodelinferencescaling, muennighoff2025s1simpletesttimescaling, schaeffer2025largelanguagemonkeyspower, kazdan2025efficientpredictionpasskscaling, schaeffer2025pretrainingscalinglawsgenerative, chen2025rethinkingfinetuningscalingtesttime}.
Unlike repeated sampling, previous sequential scaling methods let the model generate solution attempts sequentially, building upon previous attempts and allowing it to refine each attempt based on prior outcomes~\citep{snell2024scalingllmtesttimecompute, hou2025advancinglanguagemodelreasoning, lee2025evolvingdeeperllmthinking}.
Tree-based search methods~\citep{gandhi2024streamsearchsoslearning, wu2024inference} offer a hybrid approach between sequential and parallel scaling. Examples include Monte-Carlo Tree Search (MCTS)~\citep{liu2024dontthrowawayvalue, zhang2023planninglargelanguagemodels, zhou2024languageagenttreesearch, choi2023kcts} and guided beam search~\citep{xie2024self}.
\textsc{REBASE}~\citep{wu2024inference} employs a process reward model to balance exploitation and pruning during its tree search.
Empirically, \textsc{REBASE} has been shown to outperform sampling-based methods and MCTS~\citep{wu2024inference}.
Reward models play a key role in these inference-time scaling methods~\citep{lightman2023letsverifystepstep, wang-etal-2024-math, wang2024helpsteer2opensourcedatasettraining, wu2024inference, gandhi2024streamsearchsoslearning, liu2024dontthrowawayvalue, zhang2023planninglargelanguagemodels, zhou2024languageagenttreesearch, choi2023kcts, xie2024self, xin2024deepseekproveradvancingtheoremproving, ankner2024critiqueoutloudrewardmodels}.
They generally come in two variants: outcome reward models and process reward models.
Outcome reward models~\citep{xin2024deepseekproveradvancingtheoremproving, ankner2024critiqueoutloudrewardmodels} assign a score to complete solutions and are particularly useful in Best-of-N selection.
In contrast, process reward models~\citep{lightman2023letsverifystepstep, wang-etal-2024-math, wu2024inference} assess individual reasoning steps and are effective in guiding tree-based search methods. The effect of reinforcement learning has recently been analyzed by~\citet{tsilivis2025reinforcementlearningnexttokenprediction}.
Other approaches also explore simple test-time scaling techniques~\citep{muennighoff2025s1simpletesttimescaling}.

\section{Notation and asymptotic conventions}
\label{subsec:notation}

We briefly summarize the main notation and asymptotic conventions used here and in the main body of the paper.

\textbf{Metrics.}
The generalization (test) loss against the mean target is
\[
    \Ltest(N,\lambda)
    \coloneqq
    \E_{\bx,\mathcal{D}_N}\!\left[
        \bigl(\bx^\top \hat\btheta_\lambda - \bx^\top\btheta^{*}\bigr)^2
    \right].
\]
The pass@$k$ failure probability is
\[
    \Linf(k;N)
    \coloneqq
    \Prob\!\Bigl(\text{at least $k$ trials all fail under the pass@$k$ protocol}\Bigr),
\]
so that $\text{pass@}k = 1-\Linf(k;N)$.

\textbf{Asymptotic notation.}
For sequences or functions $f$ and $g$ depending on a variable $t$ (typically $t=k$ or $t=N$), we write:
\begin{itemize}
    \item $f(t) \sim g(t)$ as $t\to\infty$ if $\lim_{t\to\infty} f(t)/g(t) = 1$.
    \item $f(t) = O(g(t))$ as $t\to\infty$ if there exists $C>0$ and $t_0$ such that
    $|f(t)| \le C|g(t)|$ for all $t\ge t_0$.
    \item $f(t) = \Theta(g(t))$ as $t\to\infty$ if $f(t) = O(g(t))$ and $g(t) = O(f(t))$.
    \item $f(t) \asymp g(t)$ if there exist constants $0<c_1\le c_2<\infty$ such that
    $c_1 \le f(t)/g(t) \le c_2$ for all $t$ in the regime under consideration.
\end{itemize}
We use $o(1)$ for quantities that tend to $0$ in the relevant limit (which will always be explicitly indicated, e.g.\ $k\to\infty$ or $\delta\to 0$). When we write statements such as
\[
    f(k) = C\,k^{-\beta}\bigl(1+o(1)\bigr)
    \quad (k\to\infty),
\]
the dependence of the $o(1)$ term on other parameters (e.g.\ $N$, $\delta$) will be clear from context or specified explicitly.

We reserve the symbol $\sim$ for asymptotic equality in the sense above and \emph{do not} use the informal symbol ``$\approx$'' in our main statements. Informal approximations are always tied either to an explicit limit (e.g.\ $k\to\infty$, $\delta\to 0$) or to a pointer to a precise asymptotic statement in the appendices.

\textbf{Local log--log slopes.}
For a fixed $k$-window $[k_1,k_2]$ and fixed $N$, we define the empirical \emph{effective inference exponent} as the local log--log slope
\begin{equation}
    \beta_{\mathrm{eff}}(N;[k_1,k_2])
    \coloneqq
    -\,\frac{
        \log \Linf(k_2;N) - \log \Linf(k_1;N)
    }{
        \log k_2 - \log k_1
    }.
    \label{eq:beta_eff_def}
\end{equation}
In experiments we take $[k_1,k_2]$ to be a central window where the log--log curve for $\Linf(\cdot;N)$ is approximately linear; the precise window does not affect the qualitative behavior of $\beta_{\mathrm{eff}}(N)$.

\section{High Dimensional Ridge Regression with Noisy Labels}
\label{app:ridge_reg}

In this appendix, we re-derive and analyze the generalization error for high-dimensional Ridge linear regression with additive label noise used in the main text.
While these results are well-established in the literature (see, e.g., \citet{maloney2022solvable, hastie2022surprises} and references therein), we present a concise derivation to highlight the connection to our Latent Instance Difficulty (LID) model and to set the stage for understanding the training scaling laws discussed in Section~\ref{subsec:training_scaling_main}.

\subsection{Model setup}

We consider a linear model where the learner aims to estimate a true underlying parameter vector \(\btheta^* \in \reals^d\) from \(N\) training samples \((\bx_i, y_i)\).
The input features \(\bx_i \in \reals^d\) are drawn i.i.d.\ from a distribution with zero mean and covariance matrix \(\bSigma = \E[\bx\bx^\top]\).
The observed labels \(y_i\) are generated according to
\begin{equation}
    y_i = \bx_i^\top \btheta^* + \eta_i,
    \label{eq:app_label_gen}
\end{equation}
where \(\eta_i\) is the label noise for sample \(i\).
In the context of our LID model (Definition~\ref{def:lid_main}), \(\eta_i\) is a realization from \(\mathcal{N}(0, \sigma_\eta^2/\tau_{\bx_i})\).
For the present derivation, we assume \(\eta_i\) are i.i.d.\ with \(\E[\eta_i]=0\) and \(\E[\eta_i^2] = \sigma_{\text{noise}}^2 < \infty\).
If we were directly applying LID, $\sigma_{\text{noise}}^2$ would be replaced by $\E[\sigma_\eta^2/\tau_\bx] = \sigma_\eta^2 \E[1/\tau_\bx]\).

The learner estimates \(\btheta^*\) using Ridge regression by minimizing the loss
\begin{equation}
    L(\hat{\btheta})
    =
    \frac{1}{2N} \sum_{i=1}^N (y_i - \bx_i^\top \hat{\btheta})^2 + \frac{\lambda}{2} \|\hat{\btheta}\|_2^2,
    \label{eq:app_ridge_loss}
\end{equation}
where \(\lambda \ge 0\) is the regularization parameter.
Let \(X \in \reals^{N \times d}\) be the matrix of training features (rows \(\bx_i^\top\)) and \(\by \in \reals^N\) the vector of observed labels.
The solution is
\begin{equation}
    \hat{\btheta}_\lambda
    =
    \Bigl(\frac{1}{N}X^\top X + \lambda \mathbf{I}_d\Bigr)^{-1} \frac{1}{N}X^\top \by.
    \label{eq:app_ridge_solution}
\end{equation}

\subsection{Generalization error}

The generalization error (or test loss) measures the expected squared prediction error on unseen test data \(\bx_{\text{test}}\) with true mean target \(\bx_{\text{test}}^\top \btheta^*\):
\begin{equation}
    \Ltest(N, \lambda)
    =
    \E_{\mathcal{D}_N, \bx_{\text{test}}} \left[
        \bigl(\bx_{\text{test}}^\top \hat{\btheta}_\lambda - \bx_{\text{test}}^\top \btheta^*\bigr)^2
    \right].
\end{equation}
Let \(\Delta\btheta \coloneqq \hat{\btheta}_\lambda - \btheta^*\).
Then \(\Ltest = \E[\Delta\btheta^\top \bSigma \Delta\btheta]\), where the expectation is over the training data \(\mathcal{D}_N\).

Substituting \(\by = X\btheta^* + \vec{\eta}\) (where \(\vec{\eta}\) is the vector of noise realizations \(\eta_i\)) into~\eqref{eq:app_ridge_solution}, we obtain
\begin{align*}
    \hat{\btheta}_\lambda
    &=
    \Bigl(\frac{1}{N}X^\top X + \lambda \mathbf{I}_d\Bigr)^{-1}
    \frac{1}{N}X^\top (X\btheta^* + \vec{\eta}) \\
    &=
    \Bigl(\frac{1}{N}X^\top X + \lambda \mathbf{I}_d\Bigr)^{-1}
    \left(\frac{1}{N}X^\top X \btheta^* + \frac{1}{N}X^\top \vec{\eta}\right).
\end{align*}
Thus the error vector is
\begin{align}
    \Delta\btheta
    &= \hat{\btheta}_\lambda - \btheta^* \nonumber \\
    &= \Bigl(\frac{1}{N}X^\top X + \lambda \mathbf{I}_d\Bigr)^{-1}
       \left(\frac{1}{N}X^\top X \btheta^* + \frac{1}{N}X^\top \vec{\eta}\right)
       - \btheta^* \nonumber \\
    &=
    \Bigl(\frac{1}{N}X^\top X + \lambda \mathbf{I}_d\Bigr)^{-1}
    \left[
        \frac{1}{N}X^\top \vec{\eta}
        - \lambda \btheta^*
    \right].
    \label{eq:app_delta_theta}
\end{align}

Assuming \(\btheta^*\) is fixed (not random) and \(\E[\vec{\eta}] = \mathbf{0}\), \(\E[\vec{\eta}\vec{\eta}^\top] = \sigma_{\text{noise}}^2 \mathbf{I}_N\),
\[
    \E[\Delta\btheta]
    =
    -\Bigl(\tfrac{1}{N}X^\top X + \lambda \mathbf{I}_d\Bigr)^{-1} \lambda \btheta^*
\]
and
\begin{align}
    \operatorname{Cov}(\Delta\btheta)
    &=
    \Bigl(\tfrac{1}{N}X^\top X + \lambda \mathbf{I}_d\Bigr)^{-1}
    \E\Bigl[ \Bigl(\tfrac{1}{N}X^\top \vec{\eta}\Bigr)
             \Bigl(\tfrac{1}{N}X^\top \vec{\eta}\Bigr)^\top \Bigr]
    \Bigl(\tfrac{1}{N}X^\top X + \lambda \mathbf{I}_d\Bigr)^{-1} \nonumber \\
    &=
    \Bigl(\tfrac{1}{N}X^\top X + \lambda \mathbf{I}_d\Bigr)^{-1}
    \frac{\sigma_{\text{noise}}^2}{N^2} X^\top X
    \Bigl(\tfrac{1}{N}X^\top X + \lambda \mathbf{I}_d\Bigr)^{-1}.
\end{align}
Hence
\begin{align}
    \Ltest
    &=
    \E[\Delta\btheta]^\top \bSigma \E[\Delta\btheta]
    +
    \operatorname{Tr}\!\Bigl(\bSigma\,\operatorname{Cov}(\Delta\btheta)\Bigr) \nonumber \\
    &=
    \lambda^2\,
    \btheta^{*\top}
    \Bigl(\hat{\bSigma} + \lambda \mathbf{I}_d\Bigr)^{-1}
    \bSigma
    \Bigl(\hat{\bSigma} + \lambda \mathbf{I}_d\Bigr)^{-1}
    \btheta^{*}
    \nonumber \\
    &\quad +
    \sigma_{\text{noise}}^2\,
    \operatorname{Tr}\!\Bigl(
        \bSigma
        \bigl(\hat{\bSigma} + \lambda \mathbf{I}_d\bigr)^{-1}
        \hat{\bSigma}
        \bigl(\hat{\bSigma} + \lambda \mathbf{I}_d\bigr)^{-1}
    \Bigr),
    \label{eq:app_ltest_general}
\end{align}
where \(\hat{\bSigma}= \tfrac{1}{N} X^\top X\) is the empirical covariance.

\subsubsection{Underparameterized regime (\texorpdfstring{$N \gg d$}{N >> d})}

In the underparameterized limit \(N \gg d\) with $\lambda\to 0$, the bias term vanishes and we obtain
\begin{align}
    \Ltest
    &=
    \sigma_{\text{noise}}^2
    \operatorname{Tr}\Bigl(
        \bSigma\,\hat{\bSigma}^{-1}
    \Bigr).
\end{align}
In the setting discussed in the main text, the population covariance can be written as \(\bSigma=\mathbf{\Lambda}\), where $\mathbf{\Lambda}$ is diagonal with a decaying power-law spectrum as in~\eqref{eq:data_dist}.
As stated in~\citet{silverstein1995empirical,couillet_liao_2022}, the empirical covariance matrix can be expressed as $\hat{\bSigma} = \mathbf{\Lambda}\mathbf{W}$ where $\mathbf{W}$ is a Wishart matrix with aspect ratio \(\kappa = d/N\).
Then
\begin{align}
\label{eq:underparmaeterized_scaling}
    \Ltest
    =
    \sigma_{\text{noise}}^2
    \operatorname{Tr}\bigl(
        \mathbf{W}^{-1}
    \bigr).
\end{align}
The eigenvalues of $\mathbf{W}^{-1}$ follow the inverse Marchenko–Pastur law~\citep{couillet_liao_2022}, which implies
\begin{equation}
    \E\bigl[\operatorname{Tr}(\mathbf{W}^{-1})\bigr]
    =
    \frac{d}{N-d-1}
    \qquad\text{for } N>d+1.
\end{equation}
Thus, in the high-sample limit $N/d\to\infty$,
\begin{equation}
    \Ltest(N)
    \sim
    \sigma_{\text{noise}}^2\,\frac{d}{N}
    \qquad (N\to\infty,~N\gg d),
\end{equation}
which recovers the $d/N$ tail used in~\eqref{eq:train_scaling_under_main} up to a constant factor.

Near the interpolation threshold \(N \approx d\), one obtains the familiar peak in \(\Ltest\) that forms one side of ``double descent''~\citep{Belkin_2019}.

\subsubsection{Overparameterized regime (\texorpdfstring{$N \ll d$}{N << d})}

In the overparameterized regime, the interpolation solution is obtained by taking a vanishing ridge parameter \(\lambda\to 0\) with the scaling \(\lambda = \tilde{\lambda}/N\)~\citep{bahri2021explaining}.
Under this scaling, the test loss decomposes as
\begin{align}
    \Ltest
    &=
    \frac{\tilde{\lambda}^2}{N^2}
    \btheta^{*\top}
    \Bigl(\hat{\bSigma} + \tfrac{\tilde{\lambda}}{N}\mathbf{I}_d\Bigr)^{-1}
    \bSigma
    \Bigl(\hat{\bSigma} + \tfrac{\tilde{\lambda}}{N}\mathbf{I}_d\Bigr)^{-1}
    \btheta^{*}
    \nonumber \\
    &\quad+
    \sigma_{\text{noise}}^2\,
    \operatorname{Tr}\!\Bigl(
        \bSigma
        \bigl(\hat{\bSigma} + \tfrac{\tilde{\lambda}}{N}\mathbf{I}_d\bigr)^{-1}
        \hat{\bSigma}
        \bigl(\hat{\bSigma} + \tfrac{\tilde{\lambda}}{N}\mathbf{I}_d\bigr)^{-1}
    \Bigr).
    \label{eq:app_ltest_general_over}
\end{align}
The first term (the bias) can be analyzed via replica or random matrix methods, leading to self-consistent equations of the form
\begin{equation}
    \text{Bias}^2
    =
    \frac{\tilde{\lambda}^2}{N^2}
    \sum_{i=1}^d
    \frac{\sigma_i^2}{\bigl(\sigma_i^2 + \tfrac{\tilde{\lambda}}{N}\bigr)^2},
    \qquad
    \tilde{\lambda}
    =
    \sum_{i=1}^d
    \frac{\frac{\tilde{\lambda}}{N}\sigma_i^2}{\sigma_i^2 + \tfrac{\tilde{\lambda}}{N}},
    \label{eq:consistency}
\end{equation}
where $\{\sigma_i^2\}$ are the eigenvalues of $\bSigma$; see \citet{bordelon2020spectrum} for a detailed derivation.
For power-law spectra $\sigma_i^2 \propto i^{-(1+\alpha)}$, solving~\eqref{eq:consistency} yields the scaling
\[
    \Ltest(N)
    \propto
    N^{-\alpha}
    \qquad (N\ll d),
\]
up to multiplicative constants depending on the spectrum and on teacher alignment~\citep{bartlett2020benign, hastie2022surprises}.
The noise term in the overparameterized regime behaves similarly to the underparameterized case with the roles of $d$ and $N$ exchanged, and contributes primarily near the interpolation threshold, completing the double-descent picture~\citep{maloney2022solvable}.

In the context of the LID model, \(\sigma_{\text{noise}}^2\) is replaced by \(\sigma_\eta^2 \E[1/\tau_\bx]\).
Thus, Assumption~\ref{ass:finite_variance_main} (\(\beta > 2\)) is crucial for these scalings to hold with finite prefactors; if \(\beta \le 2\), \(\E[1/\tau_\bx]\) diverges and the standard Ridge regression analysis must be modified.

\section{Inference Scaling at Fixed $N$}
\label{app:inf_integral}

We provide the detailed steps for the asymptotic evaluation of the pass@$k$ failure probability integral given in~\cref{eq:pass_k_def} for large \(k\)
\begin{equation}
    \Linf(k) = \E_{\tau_\bx \sim \G(\beta/2, 1)} \left[ [p(\bx, \tau_\bx)]^k \right] \approx \E_{\tau_\bx \sim \G(\beta/2, 1)} [ e^{-kc_\delta\sqrt{\tau_\bx}} ].
\end{equation}
Here, \(c_\delta = 2\delta / (\sqrt{2\pi}\sigma_\eta)\) and \(p(\bx, \tau_\bx) \approx 1 - c_\delta\sqrt{\tau_\bx}\) for small \(\tau_\bx\). The PDF of \(\tau_\bx \sim \G(\beta/2, 1)\) is \(f(\tau) = \frac{1}{\Gamma(\beta/2)} \tau^{\beta/2 - 1} e^{-\tau}\).
The expectation integral is
\begin{equation}
    \Linf(k) = \int_0^\infty e^{-kc_\delta\sqrt{\tau}} \frac{1}{\Gamma(\beta/2)} \tau^{\beta/2 - 1} e^{-\tau} d\tau.
\end{equation}
For large \(k\), the factor \(e^{-kc_\delta\sqrt{\tau}}\) decays extremely rapidly for any \(\tau > 0\), forcing the dominant contribution to the integral to come from the region near \(\tau = 0\). In this region, the term \(e^{-\tau}\) in the Gamma PDF is approximately 1. Thus, we approximate the integral as
\begin{equation}
    \Linf(k) \approx \frac{1}{\Gamma(\beta/2)} \int_0^\infty e^{-kc_\delta\sqrt{\tau}} \tau^{\beta/2 - 1} d\tau. \label{eq:app_integral_approx}
\end{equation}
We perform a change of variable: Let \(u = kc_\delta\sqrt{\tau}\).
Then \(\sqrt{\tau} = u/(kc_\delta)\), which implies \(\tau = (u/(kc_\delta))^2 = u^2 / (kc_\delta)^2\).
The differential is \(d\tau = \frac{2u}{(kc_\delta)^2} du\).
Substituting these into the integral \eqref{eq:app_integral_approx}
\begin{align*}
    \int_0^\infty e^{-u} \left( \frac{u^2}{(kc_\delta)^2} \right)^{\beta/2 - 1} \frac{2u}{(kc_\delta)^2} du
    &= \int_0^\infty e^{-u} \frac{u^{\beta-2}}{(kc_\delta)^{\beta-2}} \frac{2u}{(kc_\delta)^2} du \\
    &= \frac{2}{(kc_\delta)^{\beta-2} (kc_\delta)^2} \int_0^\infty e^{-u} u^{\beta-1} du \\
    &= \frac{2}{(kc_\delta)^\beta} \int_0^\infty u^{\beta-1} e^{-u} du.
\end{align*}
The remaining integral is the definition of the Gamma function, \(\Gamma(\beta)\), which converges for \(\beta > 0\).
\begin{align} \int_0^\infty u^{\beta-1} e^{-u} du = \Gamma(\beta). \end{align}
Substituting this back into the expression for \(\Linf(k)\)
\begin{equation}
    \Linf(k) \approx \frac{1}{\Gamma(\beta/2)} \frac{2 \Gamma(\beta)}{(kc_\delta)^\beta} = \left( \frac{2 \Gamma(\beta)}{\Gamma(\beta/2) (c_\delta)^\beta} \right) k^{-\beta}.
\end{equation}
This confirms the asymptotic scaling \(\Linf(k) \sim k^{-\beta}\) for large \(k\).

\subsection{Assumptions for the Tauberian step}\label{app:tauber_assumptions}
We use standard Laplace–Stieltjes Tauberian arguments under the following mild conditions:
\begin{enumerate}
\item \textbf{Regularly varying difficulty near zero.} The latent precision satisfies
$\Pr(\tau_x\le t)=t^{\beta/2}L(t)$ as $t\downarrow 0$, with $L$ slowly varying and bounded on $(0,t_0]$.
\item \textbf{Uniform small-window expansion.} For some $c_\delta=\sqrt{2/\pi}\,\delta/\sigma_\eta$ and any compact $[\tau_{\mathrm{lo}},\tau_{\mathrm{hi}}]\subset (0,\infty)$,
\[
s(B,\tau):=\Pr\big(|B-\eta|\le\delta\mid \tau\big)
= c_\delta\,\sqrt{\tau}\,\exp\!\Big(-\tfrac{B^2\tau}{2\sigma_\eta^2}\Big)\,(1+o(1))
\]
uniformly in $\tau\in[\tau_{\mathrm{lo}},\tau_{\mathrm{hi}}]$ as $\delta\downarrow 0$, with
$c_\delta=\sqrt{2/\pi}\,\delta/\sigma_\eta$ and $\eta\sim\mathcal N(0,\sigma_\eta^2/\tau)$.
\item \textbf{Model error.} $B_N(x)=x^\top(\hat\theta_\lambda-\theta^*)$ is centered sub-Gaussian with $\Var[B_N]\asymp \Ltest(N)$ and is independent of $\tau_x$ (or weakly dependent so that conditioning on $\tau_x$ preserves sub-Gaussian tails).
\item \textbf{Independent trials and perfect verification.} Conditional on $(x,\tau_x)$ and the training set, the $k$ comparisons are i.i.d., and success is declared if any trial lies within the tolerance.
\end{enumerate}
Under (1)–(4), Karamata-type Tauberian theorems yield the mixture law in~\eqref{eq:mixture_passk} and the $k^{-\beta}$ tail in~\eqref{eq:inf_scaling_main_final}, with constants as stated.

\subsection{Equivalence to model-draw pass@\texorpdfstring{$k$}{k}.}\label{app:inf_fixedN}

Consider the alternative protocol that draws $k$ i.i.d.\ model proposals $\tilde y_j=\hat y+\xi_j$ with proposal noise $\xi$ having a bounded, smooth density $f_\xi$ independent of $\tau_x$, and verifies against a fixed target $y^\star$.
Writing $B_N(x)=\hat y-m(x)$, the per-trial success probability is
\begin{align}
   \int_{|u|\le \delta} f_\xi\!\big(B_N(x)-u\big)\,du \;=\; 2\delta\, f_\xi\!\big(B_N(x)\big)\,(1+o(1))\quad(\delta\downarrow 0). 
\end{align}
In our target-draw protocol the corresponding quantity is $2\delta\, f_\eta\!\big(B_N(x);\tau_x\big)$ with $f_\eta(\cdot;\tau_x)=\mathcal N(0,\sigma_\eta^2/\tau_x)$.
Thus both protocols reduce to $(1-p)^k$ with $p\propto 2\delta$ times a \emph{local density at $B_N(x)$}; since the only heavy tail in LID comes from $f_\eta(\cdot;\tau_x)\propto \sqrt{\tau_x}$ as $\tau_x\downarrow 0$, the small-success behavior, and hence the $k^{-\beta}$ tail, is unchanged by swapping model vs. target sampling (up to prefactors).
This equivalence holds provided $f_\xi$ is bounded and does not itself introduce a $\tau_x$-dependent tail, and trials are conditionally independent.

\section{Training-dependent inference scaling in the LID linear model}
\label{app:train_dependent}

We provide a more detailed derivation of the two-tail mixture law (Theorem~\ref{thm:two_tail}) and the training-dependent effective exponent (Corollary~\ref{cor:beta_eff}).

\subsection{Setup}

Recall the LID setting:
\begin{itemize}
    \item Features $\bx\in\mathbb{R}^d$ are drawn from $\mathcal{N}(\mathbf{0},\bSigma)$ with eigenvalues $\sigma_j^2\propto j^{-(1+\alpha)}$.
    \item The teacher is linear: $m(\bx)=\bx^\top\btheta^{*}$.
    \item Each instance has latent precision $\tau_\bx\sim\Gamma(\beta/2,1)$, independent of $\bx$.
    \item The stochastic target is $Y^{*}_{\bx}\sim\mathcal{N}\bigl(m(\bx),\sigma_\eta^2/\tau_\bx\bigr)$.
    \item Training observes i.i.d.\ pairs \((\bx_i,y_i)\) with \(y_i\sim Y^{*}_{\bx_i}\) and fits ridge/OLS to obtain \(\hat\btheta_\lambda(N)\) from \(N\) samples.
\end{itemize}
At test time we evaluate
\begin{itemize}
    \item the training loss \(\Ltest(N)=\E_{\bx}[(\bx^\top\hat\btheta_\lambda-\bx^\top\btheta^{*})^2]\),
    \item the inference loss \(\Linf(k;N)\) under a perfect verifier with tolerance $\delta>0$,
\end{itemize}
where for each test $\bx$ we draw $k$ i.i.d.\ $y_j\sim Y^{*}_{\bx}$ and declare success if $\min_{j\le k}|\bx^\top\hat\btheta_\lambda - y_j|\le\delta$.

\subsection{Assumptions for the Tauberian analysis}
\label{app:tauber_assumptions}

We use standard Laplace–Stieltjes Tauberian arguments under the following conditions:

\begin{assumption}[Difficulty tail and small-window success expansion]
\label{ass:tauberian_difficulty}
\hfill
\begin{enumerate}
    \item \textbf{Near-zero tail of difficulty.}
    The latent precision has a regularly varying CDF
    \[
        \Pr(\tau_\bx \le t)
        =
        t^{\beta/2} L(t)
        \quad\text{as } t \downarrow 0,
    \]
    with $L$ slowly varying and $\beta>0$.
    (For $\Gamma(\beta/2,1)$ one has $L(t)\to 1/\Gamma(\beta/2+1)$.)

    \item \textbf{Small-window success form.}
    For tolerance $\delta>0$ and any compact $[\tau_{\mathrm{lo}},\tau_{\mathrm{hi}}]\subset(0,\infty)$, there exists
    \[
        c_\delta
        =
        \sqrt{\frac{2}{\pi}}\frac{\delta}{\sigma_\eta}
    \]
    such that
    \begin{equation}
    \label{eq:small_window_success}
        s(B,\tau_\bx)
        \coloneqq
        \Prob(|B-\eta|\le\delta \mid \tau_\bx)
        =
        c_\delta \sqrt{\tau_\bx}\,
        \exp\!\Bigl(-\frac{B^2\tau_\bx}{2\sigma_\eta^2}\Bigr)\bigl(1+o(1)\bigr)
    \end{equation}
    as $\delta\downarrow 0$, uniformly in $\tau_\bx\in[\tau_{\mathrm{lo}},\tau_{\mathrm{hi}}]$, where $\eta\sim\mathcal{N}(0,\sigma_\eta^2/\tau_\bx)$.
\end{enumerate}
\end{assumption}

\begin{assumption}[Model error distribution]
\label{ass:tauberian_bias}
The prediction bias $B_N(\bx)=\bx^\top(\hat\btheta_\lambda-\btheta^{*})$ satisfies:
\begin{itemize}
    \item $\E[B_N(\bx)] = 0$,
    \item $B_N(\bx)$ is sub-Gaussian with $\Var[B_N(\bx)] \asymp \Ltest(N)$,
    \item $B_N(\bx)$ is independent of $\tau_\bx$ (or weakly dependent so that conditioning on $\tau_\bx$ preserves sub-Gaussian tails).
\end{itemize}
\end{assumption}

Assumption~\ref{ass:tauberian_difficulty} is a mild regular-variation and smoothness condition.
Assumption~\ref{ass:tauberian_bias} holds for the linear Ridge setting with Gaussian features and power-law covariance (App.~\ref{app:ridge_reg}), where $B_N(\bx)$ is approximately Gaussian with variance scaling as $\Ltest(N)$.

\subsection{Two-tail law for single-try success probabilities}

Define the single-try success probability
\[
    S_N(\bx,\tau_\bx)
    \coloneqq
    1 - p(\bx,\tau_\bx)
    =
    \Prob(|B_N(\bx)-\eta|\le\delta \mid \tau_\bx),
\]
and let $S_N$ denote the random variable obtained by sampling $(\bx,\tau_\bx)$ from the test distribution.
We first characterize the small-$S_N$ tail of its distribution.

\begin{proposition}[Two-tail law for $S_N$]
\label{prop:two_tail_SN}
Under Assumptions~\ref{ass:tauberian_difficulty}–\ref{ass:tauberian_bias}, there exist constants $A>0$, $B(N)>0$ and a function $\gamma(N)>0$ such that, as $s\downarrow 0$,
\begin{equation}
    \Prob\bigl(S_N \le s\bigr)
    =
    A\,s^{\beta}
    +
    B(N)\,s^{\gamma(N)}\bigl(1+o(1)\bigr),
    \label{eq:SN_two_tail}
\end{equation}
with
\begin{equation}
\label{eq:gamma_vs_ltest_app}
    \gamma(N)
    = \Theta\!\Bigl(\frac{1}{\Var[B_N(\bx)]}\Bigr)
    = \Theta\!\Bigl(\frac{1}{\Ltest(N)}\Bigr),
\end{equation}
and
\[
    A
    =
    \frac{1}{\Gamma(\beta/2+1)}\,c_\delta^{-\beta},
    \qquad
    c_\delta
    =
    \sqrt{\frac{2}{\pi}}\,\frac{\delta}{\sigma_\eta}.
\]
\end{proposition}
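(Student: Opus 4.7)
My plan is to exploit the explicit factorization in Assumption \ref{ass:tauberian_difficulty}(2),
\[
S_N(\bx,\tau_\bx)=c_\delta\sqrt{\tau_\bx}\,\exp\!\Bigl(-\frac{B_N(\bx)^2\,\tau_\bx}{2\sigma_\eta^2}\Bigr)(1+o(1)),
\]
which shows that small $S_N$ arises through two disjoint mechanisms: (i) an \emph{intrinsic} mechanism in which $\tau_\bx$ itself is tiny so the prefactor $\sqrt{\tau_\bx}$ vanishes, whose frequency is governed by the heavy tail in Assumption \ref{ass:tauberian_difficulty}(1); and (ii) a \emph{bias} mechanism in which $\tau_\bx$ is of order one but $|B_N(\bx)|$ is atypically large so the exponential kills the product, whose frequency is governed by the sub-Gaussian tail in Assumption \ref{ass:tauberian_bias}. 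I would partition the event $\{S_N\le s\}$ at a threshold $\tau^\star(s)=s^{2-\varepsilon}$ with $\varepsilon>0$ small, chosen so that below $\tau^\star$ the exponential factor is $1+o(1)$ on a sub-Gaussian-typical event for $B_N$, and above $\tau^\star$ the $\sqrt{\tau_\bx}$ prefactor alone cannot bring $S_N$ below $s$.

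\textbf{Intrinsic tail.} On $\{\tau_\bx\le\tau^\star(s)\}$, away from the rare event $\{|B_N|\ge M\sqrt{v_N\log(1/s)}\}$ with $v_N:=\Var[B_N]$, the exponential factor equals $1+o(1)$ and $\{S_N\le s\}$ reduces to $\tau_\bx\le s^2/c_\delta^2(1+o(1))$. Applying Assumption \ref{ass:tauberian_difficulty}(1) with $L(t)\to 1/\Gamma(\beta/2+1)$ for $\Gamma(\beta/2,1)$ gives
\[
\Pr(\tau_\bx\le s^2/c_\delta^2)=\frac{s^\beta}{c_\delta^\beta\,\Gamma(\beta/2+1)}(1+o(1))=A\,s^\beta(1+o(1)),
\]
which matches the stated constant $A$; the rare-bias exclusion contributes only to the $o(1)$.

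\textbf{Bias tail.} On $\{\tau_\bx>\tau^\star(s)\}$, the event $\{S_N\le s\}$ rearranges to $B_N(\bx)^2\ge (2\sigma_\eta^2/\tau_\bx)\log(c_\delta\sqrt{\tau_\bx}/s)$. Using the sub-Gaussian bound $\Pr(B_N^2\ge u\mid\tau_\bx)\le\exp(-u/(2v_N))$, matched below by the near-Gaussianity of $B_N$ in the linear-ridge setting of App.~\ref{app:ridge_reg}, and integrating against the Gamma density produces
\[
\int_{\tau^\star(s)}^{\infty} f_\tau(\tau)\,\bigl(s/(c_\delta\sqrt{\tau})\bigr)^{\sigma_\eta^2/(v_N\tau)}\,d\tau\;=\;B(N)\,s^{\gamma(N)}(1+o(1)),
\]
with $\gamma(N)$ identified as the effective log--log slope of this integral, which scales as $\Theta(1/v_N)=\Theta(1/\Ltest(N))$, confirming \eqref{eq:gamma_vs_ltest_app}.

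\textbf{Main obstacle.} The bias tail is the delicate step. A strict Laplace evaluation around the $\tau$-saddle of the integrand in fact yields a stretched-exponential in $|\log s|$, so the clean identification with a single power $s^{\gamma(N)}$ requires either (a) reading $\gamma(N)$ as the local log--log slope of the integral over the relevant $s$-window (consistent with how $\beta_{\mathrm{eff}}(N)$ is defined in \eqref{eq:beta_eff_def} and used in \eqref{eq:mixture_passk}), or (b) strengthening Assumption \ref{ass:tauberian_bias} to a power-law deviation regime. I would pursue (a), since subsequent applications (Theorem \ref{thm:two_tail}, Corollary \ref{cor:beta_eff}) operate on a fixed $k$-window. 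One must then separately verify that the boundary region $\tau_\bx\sim\tau^\star(s)$ contributes only $o(s^\beta)+o(s^{\gamma(N)})$ to either side of the decomposition; this follows by choosing $\varepsilon$ small enough that the two rates are cleanly separated, after which additivity of the leading tails in \eqref{eq:SN_two_tail} is immediate from the disjointness of the partition in $\tau_\bx$.
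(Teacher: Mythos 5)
Your proposal follows the same two-regime decomposition as the paper's sketch — small $\tau_\bx$ yields the intrinsic $s^\beta$ tail (your computation of $A$ is correct), and moderate $\tau_\bx$ with atypically large $|B_N(\bx)|$ yields the bias tail — so the overall route is the same.

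Where you add real value is in flagging, correctly, that if the $\tau$-integral in the bias tail is taken over an unbounded range, the Laplace saddle sits at $\tau^*(s)\propto\sqrt{\log(1/s)/\Var[B_N(\bx)]}$ and the integral decays like a stretched exponential $\exp\bigl(-2\sqrt{(\sigma_\eta^2/\Var[B_N(\bx)])\log(1/s)}\bigr)$ rather than a pure power $s^{\gamma(N)}$; the paper's sketch glosses over exactly this. However, your option (a) (reading $\gamma(N)$ as a local log--log slope) is not forced. Assumption~\ref{ass:tauberian_difficulty}(2) only asserts the small-window expansion uniformly on compacts $[\tau_{\mathrm{lo}},\tau_{\mathrm{hi}}]$, and the paper's own sketch explicitly integrates only over that ``moderate range.'' For $\tau$ beyond $\tau_{\mathrm{hi}}$ (roughly $1/c_\delta^2$) the expansion fails and $S_N$ is in fact close to one for typical $B_N(\bx)$, so that region does not contribute to $\{S_N\le s\}$. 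Once $s$ is small enough that $\tau^*(s)$ exits the compact, the boundary $\tau_{\mathrm{hi}}$ dominates, and the restricted integral behaves as $e^{-\tau_{\mathrm{hi}}}\,s^{\sigma_\eta^2/(\Var[B_N(\bx)]\,\tau_{\mathrm{hi}})}$ up to slowly varying (logarithmic) factors. This is a genuine power law with $\gamma(N)=\sigma_\eta^2/(\Var[B_N(\bx)]\,\tau_{\mathrm{hi}})=\Theta\!\bigl(1/\Ltest(N)\bigr)$, matching~\eqref{eq:gamma_vs_ltest_app}. So the power-law form is recoverable as a bona fide asymptotic once you use the compact cutoff already implicit in Assumption~\ref{ass:tauberian_difficulty}(2); the local-slope reading is consistent but is the weaker of the two resolutions you offer. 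If you adopt the cutoff route, you should still verify (as you note) that the boundary region $\tau_\bx\sim\tau^\star(s)$ contributes negligibly, which your choice $\tau^\star(s)=s^{2-\varepsilon}$ handles cleanly.
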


\begin{proof}[Sketch]
For small $\tau_\bx$, $B_N(\bx)$ is negligible in~\eqref{eq:small_window_success} and $S_N\sim c_\delta\sqrt{\tau_\bx}$.
Since $\Pr(\tau_\bx\le t)\sim t^{\beta/2}/\Gamma(\beta/2+1)$ as $t\downarrow 0$, we have
\[
    \Prob(S_N\le s)
    \supset
    \Prob\Bigl(\tau_\bx \le (s/c_\delta)^2\Bigr)
    \sim
    \frac{1}{\Gamma(\beta/2+1)}\Bigl(\frac{s}{c_\delta}\Bigr)^{\beta},
\]
giving the $A s^\beta$ term.

For moderate $\tau_\bx=\Theta(1)$, small $S_N$ arises from large $|B_N(\bx)|$.
Inverting~\eqref{eq:small_window_success} for fixed $\tau_\bx$ shows that $S_N\le s$ corresponds to $|B_N(\bx)|\gtrsim \sqrt{(2\sigma_\eta^2/\tau_\bx)\log(c_\delta\sqrt{\tau_\bx}/s)}$.
Since $B_N(\bx)$ is sub-Gaussian with variance \(\Var[B_N(\bx)]\), standard tail bounds imply
\[
    \Prob\bigl(S_N\le s \mid \tau_\bx\bigr)
    \approx
    s^{\,c(\tau_\bx)/\Var[B_N(\bx)]},
\]
up to slowly varying factors, where $c(\tau_\bx)$ is bounded away from zero and infinity on $\tau_\bx\in[\tau_{\mathrm{lo}},\tau_{\mathrm{hi}}]$.
Integrating over $\tau_\bx$ in this moderate range yields the second term $B(N)s^{\gamma(N)}$ with $\gamma(N)\asymp 1/\Var[B_N(\bx)]$.
Combining the contributions gives~\eqref{eq:SN_two_tail}; see, e.g., Tauberian theorems for Laplace–Stieltjes transforms in regular-variation settings.
\end{proof}

\subsection{Mixture law for pass@\texorpdfstring{$k$}{k}}

Recall that, under our independence assumptions,
\[
    \Linf(k;N)
    =
    \E\bigl[(1-S_N)^k\bigr].
\]
This is a Laplace–Stieltjes transform evaluated at $k$.
Applying standard Tauberian theory for such transforms with the tail form~\eqref{eq:SN_two_tail} yields:

\begin{corollary}[Mixture law for $\Linf(k;N)$]
\label{cor:mixture_app}
Under the assumptions of Proposition~\ref{prop:two_tail_SN}, there exist constants $C_1>0$ and $C_2(N)>0$ such that, as $k\to\infty$,
\begin{equation}
\label{eq:mixture_passk_app}
    \Linf(k;N)
    =
    C_1\,k^{-\beta}
    +
    C_2(N)\,k^{-\gamma(N)}\bigl(1+o(1)\bigr),
\end{equation}
with $\gamma(N)$ as in~\eqref{eq:gamma_vs_ltest_app} and
\[
    C_1
    =
    \frac{2\,\Gamma(\beta)}{\Gamma(\beta/2)}\,c_\delta^{-\beta},
    \qquad
    c_\delta
    =
    \sqrt{\frac{2}{\pi}}\,\frac{\delta}{\sigma_\eta}.
\]
\end{corollary}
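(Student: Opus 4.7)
The plan is to reduce $\Linf(k;N) = \E[(1-S_N)^k]$ to a Laplace-type integral in the CDF $F(s) = \Prob(S_N \le s)$, and then apply an Abelian (Karamata) Tauberian step to each term of the two-tail expansion supplied by Proposition~\ref{prop:two_tail_SN}. Conditional independence of the $k$ trials (Assumption~\ref{ass:perfect_verify}, together with the LID sampling) immediately yields $\Linf(k;N) = \E[(1 - S_N)^k]$, and since $S_N \in [0,1]$ with $F(0)=0$, integration by parts gives
\[
    \Linf(k;N) \;=\; k \int_0^1 (1-s)^{k-1}\, F(s)\, ds,
\]
with vanishing boundary terms at both endpoints.

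Next, I would cut the integral at a small threshold $s_0 > 0$: the contribution from $(s_0, 1]$ is bounded by $F(1)\,(1 - s_0)^k$, which is exponentially small in $k$ and hence dominated by any polynomial $k^{-\rho}$ with $\rho > 0$. On $(0, s_0]$ I substitute the two-tail expansion $F(s) = A\, s^{\beta} + B(N)\, s^{\gamma(N)}(1+o(1))$ supplied by~\eqref{eq:SN_two_tail}. Each monomial yields a Beta integral,
\[
    k \int_0^1 (1 - s)^{k-1}\, s^\rho\, ds
    \;=\; k\, B(k,\rho + 1)
    \;=\; \frac{\Gamma(k+1)\,\Gamma(\rho+1)}{\Gamma(k + \rho + 1)}
    \;\sim\; \Gamma(\rho + 1)\, k^{-\rho},
\]
by Stirling, as $k \to \infty$. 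The truncation error from restricting to $(0, s_0]$ and from the $o(\cdot)$ remainder in $F$ is absorbed because $F(s)/(s^\beta + s^{\gamma(N)})$ is uniformly bounded on $(0, s_0]$.

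Plugging $A = c_\delta^{-\beta}/\Gamma(\beta/2+1)$ and using $\Gamma(\beta+1)/\Gamma(\beta/2+1) = 2\,\Gamma(\beta)/\Gamma(\beta/2)$ (from $\Gamma(z+1) = z\Gamma(z)$ applied on both sides) yields
\[
    C_1 \;=\; A\,\Gamma(\beta + 1)
    \;=\; \frac{2\,\Gamma(\beta)}{\Gamma(\beta/2)}\, c_\delta^{-\beta},
\]
matching the stated constant. The second coefficient $C_2(N) = B(N)\,\Gamma(\gamma(N) + 1)$ inherits its $N$-dependence from $B(N)$ and $\gamma(N)$, with $\gamma(N) = \Theta(1/\Ltest(N))$ passed along unchanged from~\eqref{eq:gamma_vs_ltest_app}.

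The main obstacle is controlling the remainder \emph{uniformly} given that $\gamma(N)$ itself drifts with $N$. For the statement as written ($N$ fixed, $k\to\infty$) the Abelian remainder is classical, but if one wants a joint $(N,k)\to\infty$ statement --- which is what is actually needed to justify the crossover surface and the empirical fit~\eqref{eq:beta_eff_fit} --- one must track the slowly varying factor $L$ from Assumption~\ref{ass:tauberian_difficulty} and the sub-Gaussian tail constants for $B_N(\bx)$ from Assumption~\ref{ass:tauberian_bias} in the Karamata bound, and ensure the threshold $s_0$ can be chosen so that both tail pieces are simultaneously in their asymptotic regime. Everything else is bookkeeping once the two-tail form for $F$ in Proposition~\ref{prop:two_tail_SN} is in hand.
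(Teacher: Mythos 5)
Your proof is correct and instantiates exactly the ``standard Tauberian theory'' step that the paper invokes but does not spell out for this corollary: integration by parts to $k\int_0^1(1-s)^{k-1}F(s)\,ds$ (with both boundary terms vanishing since $S_N>0$ a.s.\ and $(1-s)^k\to 0$ at $s=1$), truncation at $s_0$ to absorb the exponentially small upper tail, and Beta-integral asymptotics $kB(k,\rho+1)\sim\Gamma(\rho+1)\,k^{-\rho}$ applied termwise to the two-tail CDF from Proposition~\ref{prop:two_tail_SN}; the Gamma-function algebra $A\,\Gamma(\beta+1)=\tfrac{\Gamma(\beta+1)}{\Gamma(\beta/2+1)}c_\delta^{-\beta}=\tfrac{2\Gamma(\beta)}{\Gamma(\beta/2)}c_\delta^{-\beta}$ reproduces $C_1$ and is consistent with the bias-free Laplace-asymptotics calculation in App.~\ref{app:inf_integral}. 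Your flagged concern about uniformity of the Abelian remainder when $\gamma(N)$ drifts is a genuine subtlety the paper also leaves implicit; as you note it does not affect the corollary as stated (fixed $N$, $k\to\infty$), but a rigorous joint $(N,k)$ statement would need to track the slowly varying factor and sub-Gaussian constants as you describe.
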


Corollary~\ref{cor:mixture_app} is the more detailed version of Theorem~\ref{thm:two_tail} in the main text.

\subsection{Effective log--log slope}
\label{app:beta_eff_proof}

Let \(\beta_{\mathrm{eff}}(N;[k_1,k_2])\) be the local log--log slope defined in~\eqref{eq:beta_eff_def}.
Substituting~\eqref{eq:mixture_passk_app} and taking $k_1,k_2\to\infty$ with $k_2/k_1$ fixed, one sees that whichever term in~\eqref{eq:mixture_passk_app} dominates over $[k_1,k_2]$ determines the slope.
This yields:

\begin{corollary}[Training-dependent effective exponent, precise form]
\label{cor:beta_eff_precise}
Under the conditions of Corollary~\ref{cor:mixture_app}, fix a window $[k_1,k_2]$ such that for each $N$ either the $k^{-\beta}$ term or the $k^{-\gamma(N)}$ term dominates over this window.
Then, as $k_1\to\infty$,
\[
    \beta_{\mathrm{eff}}(N;[k_1,k_2])
    =
    \min\{\beta,\gamma(N)\}
    + o(1).
\]
In particular, if $\Ltest(N)$ decreases monotonically with $N$, then $\gamma(N)$ increases and $\beta_{\mathrm{eff}}(N)$ is monotonically non-decreasing in $N$ and converges to $\beta$.
\end{corollary}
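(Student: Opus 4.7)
The plan is a direct log--log slope computation that feeds the mixture law of Corollary~\ref{cor:mixture_app} into the definition~\eqref{eq:beta_eff_def}. Let $\mu(N)=\min\{\beta,\gamma(N)\}$ and $\Delta(N)=|\beta-\gamma(N)|$, and let $C_\star$ denote the coefficient of whichever of the two terms in~\eqref{eq:mixture_passk_app} is dominant on the window (either $C_1$ if $\beta\le\gamma(N)$ or $C_2(N)$ otherwise), with $C_{\mathrm{sub}}(N)$ and exponent $M(N)=\max\{\beta,\gamma(N)\}$ for the subdominant one. Then I would factor
\[
\Linf(k;N) \;=\; C_\star\,k^{-\mu(N)}\,\bigl[\,1+\rho(k;N)\,\bigr],\qquad \rho(k;N)=\frac{C_{\mathrm{sub}}(N)}{C_\star}\,k^{-\Delta(N)}\bigl(1+o(1)\bigr),
\]
and read the ``dominance over $[k_1,k_2]$'' hypothesis as the assertion $\sup_{k\in[k_1,k_2]}|\rho(k;N)| = o(1)$ as $k_1\to\infty$, which is exactly what strict dominance yields whenever $\Delta(N)>0$ and $\log(k_2/k_1)$ stays bounded.

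Next I would take logarithms and substitute into~\eqref{eq:beta_eff_def}:
\[
\beta_{\mathrm{eff}}(N;[k_1,k_2]) \;=\; \mu(N) \;-\; \frac{\log\bigl(1+\rho(k_2;N)\bigr) - \log\bigl(1+\rho(k_1;N)\bigr)}{\log k_2 - \log k_1}.
\]
Because $\log(1+\rho)=\rho+O(\rho^2)$ for small $\rho$, the correction term is bounded by $2\sup_{k\in[k_1,k_2]}|\rho(k;N)|/\log(k_2/k_1)$, which is $o(1)$ as $k_1\to\infty$ provided the window has non-vanishing log-width. This yields the first conclusion $\beta_{\mathrm{eff}}(N;[k_1,k_2]) = \min\{\beta,\gamma(N)\} + o(1)$. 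The monotonicity part then follows immediately from~\eqref{eq:gamma_vs_ltest_app}: if $\Ltest(N)$ is monotonically decreasing in $N$, then $\gamma(N)=\Theta(1/\Ltest(N))$ is monotonically increasing, so $\min\{\beta,\gamma(N)\}$ is non-decreasing in $N$ and saturates at $\beta$ once $\gamma(N)\ge\beta$, giving $\beta_{\mathrm{eff}}(N)\uparrow\beta$.

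The main obstacle is the crossover regime $\gamma(N)\approx\beta$, where $\Delta(N)\to 0$ and the factorization above degenerates: no term is strictly dominant on any finite-log-width window, and $\rho(k;N)$ need not be small for large $k$. The statement should therefore be read pointwise in $N$, with the $o(1)$ term not uniform near the crossover; any uniform version would require either excluding a shrinking neighborhood of $\gamma(N)=\beta$ or letting the log-width of $[k_1,k_2]$ grow in an $N$-dependent way so that $\Delta(N)\log(k_2/k_1)\to\infty$. This is precisely the reason the paper supplements the sharp identification with the empirical saturating fit~\eqref{eq:beta_eff_fit}, which smoothly interpolates through the transition at finite $k$.
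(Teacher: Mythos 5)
Your proof is correct and follows essentially the same route as the paper's terse argument in App.~\ref{app:beta_eff_proof}: substitute the mixture law~\eqref{eq:mixture_passk_app} into the slope definition~\eqref{eq:beta_eff_def}, factor out the dominant power, and show the relative remainder $\rho$ contributes $o(1)$ to the slope when the window has non-vanishing log-width. Your caveat about non-uniformity of the $o(1)$ near the crossover $\gamma(N)\approx\beta$ is a legitimate reading of the hypothesis ``one term dominates over $[k_1,k_2]$'' and usefully explains why the smooth empirical fit~\eqref{eq:beta_eff_fit} is what one sees at finite $k$ rather than a sharp $\min$.
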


Corollary~\ref{cor:beta_eff_precise} is the formal counterpart of Corollary~\ref{cor:beta_eff} in the main text and justifies the empirical saturation fit~\eqref{eq:beta_eff_fit}.

\subsection{Summary}

\begin{tcolorbox}[title=Summary: finite-\(N\) correction and training-dependent exponent, colback=white]
\textbf{Single‑trial success (small window).} For tolerance \(\delta>0\) and bias \(B_N(\bx)\),
\[
    1-p(\bx,\tau_\bx)
    =
    c_\delta\,\sqrt{\tau_\bx}\,\exp\!\Bigl(-\frac{B_N(\bx)^2\,\tau_\bx}{2\sigma_\eta^2}\Bigr)
    \bigl(1+o(1)\bigr),
    \quad
    c_\delta
    =
    \sqrt{\tfrac{2}{\pi}}\,\frac{\delta}{\sigma_\eta}.
\]

\textbf{Two‑tail mixture.} As \(k\to\infty\),
\[
    \Linf(k;N)
    =
    C_1\,k^{-\beta}
    +
    C_2(N)\,k^{-\gamma(N)}\bigl(1+o(1)\bigr),
    \qquad
    \gamma(N)=\Theta\!\Bigl(\tfrac{1}{\Ltest(N)}\Bigr),
\]
with $C_1$ given explicitly above.

\textbf{Effective slope.} In any fixed $k$-window where a single tail dominates,
\[
    \beta_{\mathrm{eff}}(N)
    =
    \min\{\beta,\gamma(N)\}
    + o(1)
    \quad\text{as }k_1\to\infty,
\]
and hence $\beta_{\mathrm{eff}}(N)$ is an increasing function of $N$ that saturates at $\beta$ as the model becomes well trained.
\end{tcolorbox}

\newpage

\section{CIFAR-10H LID: setup and evaluation details}
\label{app:cifar10h_details}

\subsection{Experimental setup}

\textbf{Feature extraction.}
We pass each image through a ResNet‑18 pretrained on ImageNet and extract the penultimate‑layer vector $\mathbf{z}_i\in\mathbb{R}^{d}$ with $d=512$. These frozen features play the role of $\bx$ in our LID analysis, and we fit only the linear head.

\textbf{Stochastic Targets from Human Labels.}
The CIFAR-10H dataset provides, for each image, the distribution of labels assigned by multiple human annotators. For each image \(\mathbf{x}_i\) (mapped to feature \(\mathbf{z}_i\)), we calculate the mean \(m_i = \sum_{c=0}^{9} c \cdot P(\text{label}=c | \mathbf{x}_i)\) and variance \(v_i = \sum_{c=0}^{9} (c-m_i)^2 \cdot P(\text{label}=c | \mathbf{x}_i)\) of these human label distributions.
The key connection to our LID model is made by treating the target for our linear regressor as intrinsically stochastic. 
In order to control the signal to noise ratio, we introduce a scaling factor \(s\) to account for the magnitude of the noise variance, which is controllable in the LID setting.
The learning problem is then defined on rescaled quantities
\begin{align}
    \text{Rescaled features: } 
    \mathbf{z}_i^{\text{scaled} } = 
    \mathbf{z}_i / s, 
    \qquad
    \text{Rescaled mean target: } 
    m_i^{\text{scaled}} = m_i^{\text{orig}} / s.
\end{align}
During training, for each rescaled feature vector \(\mathbf{z}_i^{\text{scaled}}\), the target label \(y_i\) is sampled from a Gaussian distribution centered at its rescaled mean, using the human variance
\begin{equation}
    y_i \sim \mathcal{N}(\text{mean} = m_i^{\text{scaled}}, \text{variance} = v_i+ \epsilon_v).
    \label{eq:cifar_lid_sampling_rescaled}
\end{equation}
where \(m_i^{\text{scaled}}\) is the rescaled mean human label for the instance, \(v_i\) is the variance of the human labels, and \(\epsilon_v\) is a small constant (e.g., \(10^{-6}\)) to ensure non-zero variance. Here, \(v_i\) plays a role analogous to \(1/\tau_{\mathbf{x}_i}\) in our synthetic LID model, with high human label variance corresponding to a ``difficult'' instance (low effective precision). 

\textbf{Model and Training.}
We perform fine tuning by training a linear regression model  \(\hat{y} = \mathbf{z}^{\text{scaled}, T} \hat{\boldsymbol{\theta}}\) to predict a scalar value from the rescaled features \(\mathbf{z}^\text{scaled}\) extracted from the pretrained ResNet. Given the quadratic loss and linear model, we compute the optimal weights \(\hat{\boldsymbol{\theta}}\) analytically using the Ridge regression solution with an effectively scaled regularization parameter \(\lambda_{\text{eff}} = \lambda / s^2\)
\begin{equation}
    \hat{\boldsymbol{\theta}}_ \lambda = ( N^{-1}\mathbf{Z}_{\text{scaled}}^T \mathbf{Z}_{\text{scaled}} + \lambda_{\text{eff}}  \mathbf{I})^{-1} 
     N^{-1}
     \mathbf{Z}_{\text{scaled}}^T \mathbf{y},
\end{equation}
where \(\mathbf{Z}_{\text{scaled}}\) is the matrix of rescaled training features, and \(\mathbf{y}\) is the vector of sampled noisy labels (from \cref{eq:cifar_lid_sampling_rescaled}).

\textbf{Evaluation metrics.}
\textit{Generalization loss $\Ltest$.} On a held‑out set we compute the MSE between $\hat{y}_{\text{val}}=\mathbf{z}_{\text{val}}^{\text{scaled}\,T}\hat{\boldsymbol{\theta}}$ and the rescaled human mean $m_{\text{val}}^{\text{scaled}}$. (To compare across different $s$, multiply by $s^2$.) \\
\textit{Inference failure $\Linf(k;N)$.} For each validation point we draw $k$ i.i.d. realizations $y_j\sim\mathcal{N}(m_{\text{val}}^{\text{scaled}}, v_{\text{val}}+\epsilon_v)$ and declare success if $|\hat{y}_{\text{val}}-y_j|\le\delta_{\mathrm{eff}}$ for any $j$, with $\delta_{\mathrm{eff}}=\delta/s$. We average the all‑fail indicator over the set. This implements the perfect‑verification assumption from~\cref{subsec:inference_scaling_main}.

\newpage

\section{Additional Examples from CIFAR-10H}
\label{app:cifar10h_more}

Here, we provide some additional examples from the CIFAR-10H dataset, to illustrate the type of stochastic labels inherent in the data.

\begin{figure}[htbp!]
    \centering
    \includegraphics[width=1\linewidth]{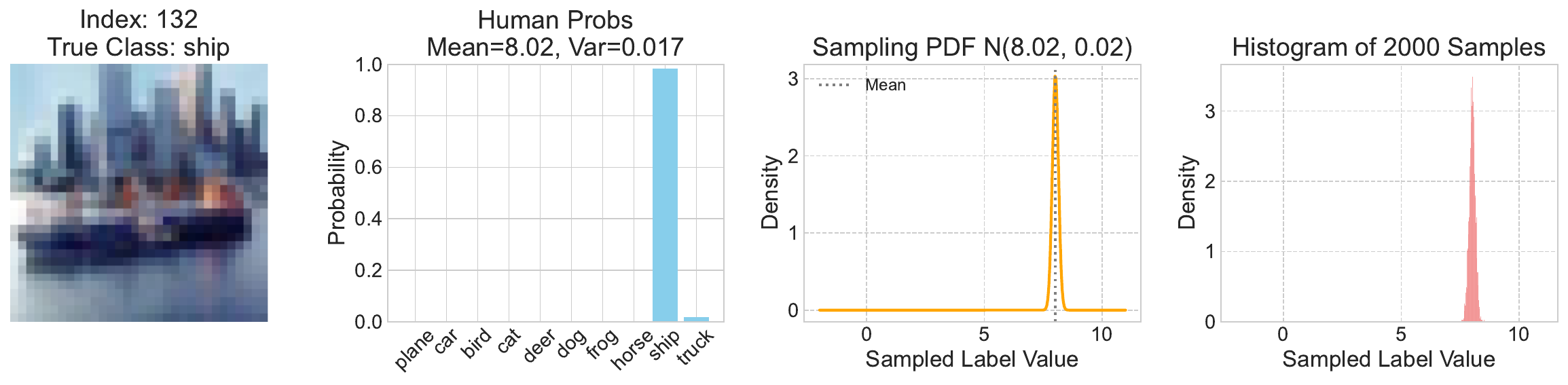}
    \includegraphics[width=1\linewidth]{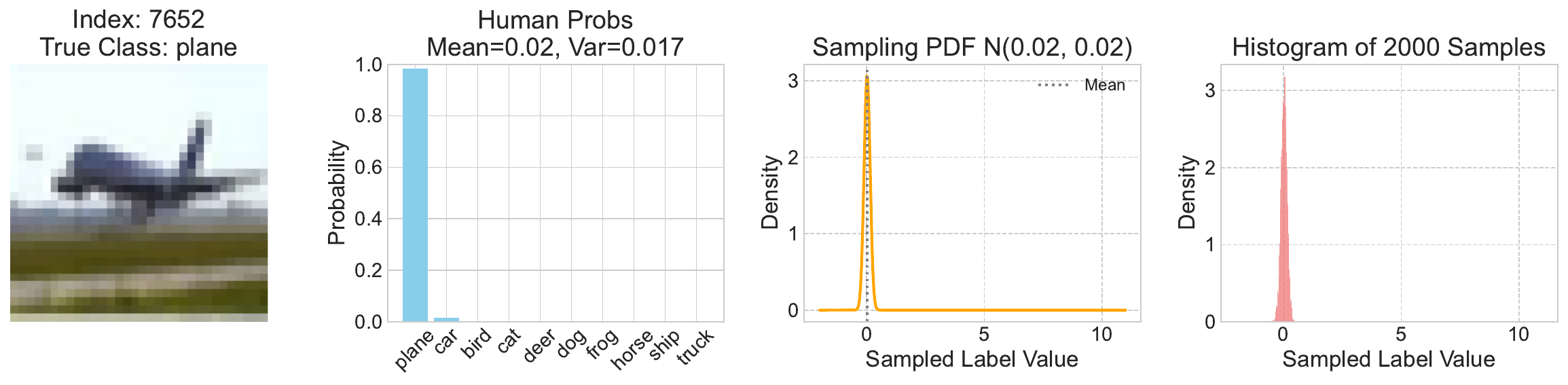}
    \includegraphics[width=1\linewidth]{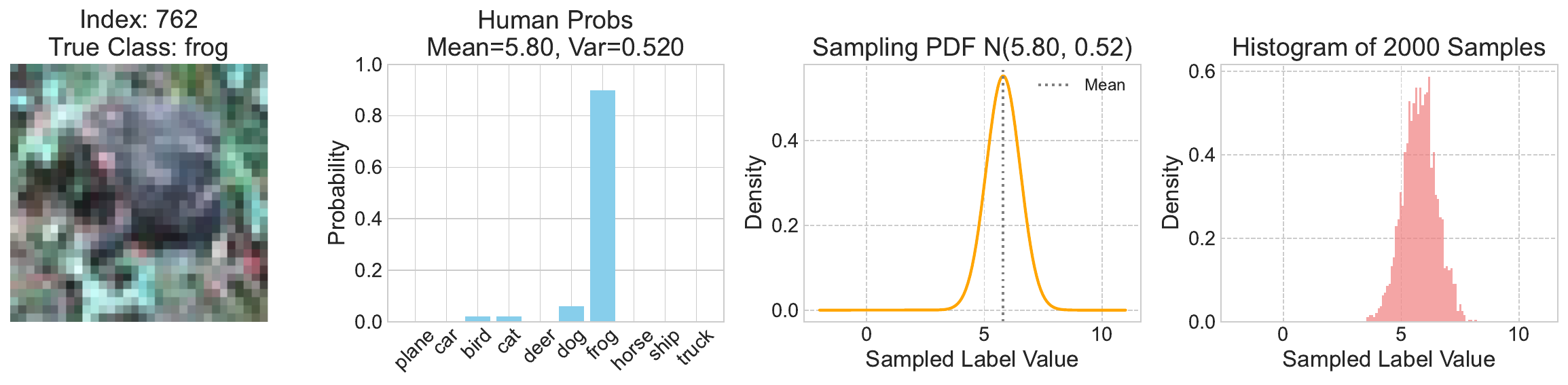} 
    \includegraphics[width=1\linewidth]{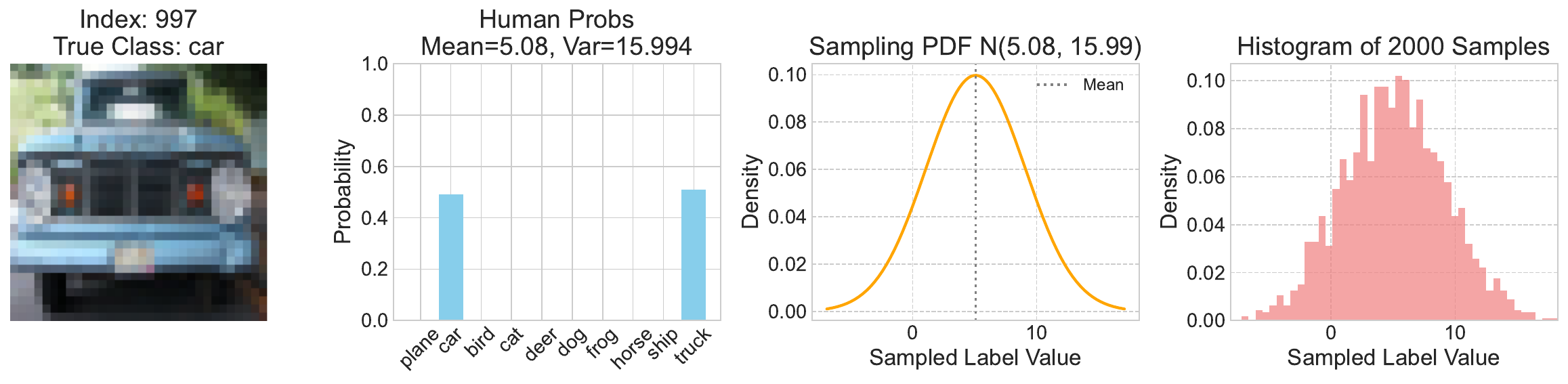}
    \includegraphics[width=1\linewidth]{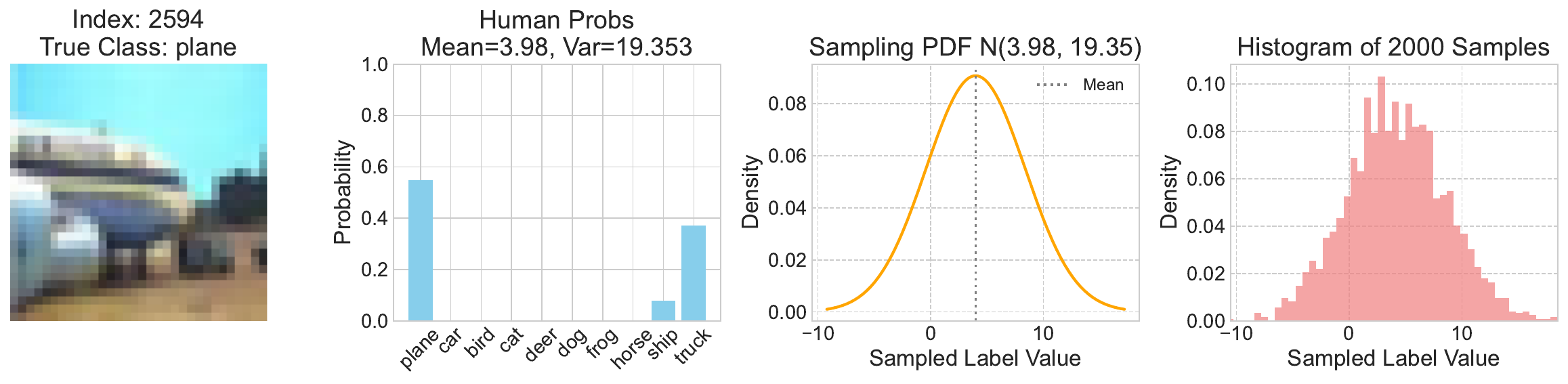}
    \caption{
    {\bf{Examples for low and high variance images from CIFAR-10H.}}
    {\it Top to bottom:} low variance samples are easier to predict (more localized near the average prediction) while high variance samples are difficult.
    }
    \label{fig:cifar10h_examples}
\end{figure}

\newpage

\section{GSM8K teacher–student distillation: setup and evaluation details}
\label{app:gsm8k_details}

\textbf{Data and models.}
Teacher: \texttt{Flan‑T5‑XL}. Student: \texttt{Flan‑T5‑small} with LoRA (rank $r{=}8$, $\alpha{=}16$, dropout $0.05$) trained for one epoch, batch size $4$, learning rate $5\!\times\!10^{-5}$. We sample one teacher solution per training question and fine‑tune on subsets $N\!\in\!\{10,\ldots,6309\}$ (log‑spaced). Evaluation uses the standard GSM8K test split.

\textbf{Strict greedy proxy for training loss.}
For each test question we decode a single student output and parse a numeric only when it appears in an explicit ``Final answer:'' slot. Let $V_N$ be the set of test items with a valid parse. With tolerance $\delta=10^{-3}$ we report
\[
\mathcal{L}_{\mathrm{gen}}^{(\mathrm{greedy})}(N)
=
\frac{1}{|V_N|}\sum_{x\in V_N}\!\bigl(\hat{y}(x)-y^\star(x)\bigr)^2,
\]
Winsorized by trimming the top $1\%$ of $|\hat{y}-y^\star|$ to reduce the influence of rare formatting/scale outliers. (Coverage $|V_N|/|\text{test}|$ is tracked but not plotted in the main text.)

\textbf{Inference metrics.}
We compute $\Linf(k;N)=1-\text{pass@}k$ using saved student samples and a perfect numeric verifier with the same tolerance $\delta$. The effective slope $\hat{\beta}_{\mathrm{eff}}(N)$ is the local log–log slope of $\Linf(\cdot;N)$ over a fixed central $k$ window (cf.\ \S\ref{subsec:inference_scaling_main}).

\textbf{Interpretation and link to LID.}
Distillation uses one noisy teacher sample per training input, so improvements can first appear as increased mass on good answers rather than a large shift in the point estimate. Consequently, $\Linf(k;N)$ is more sensitive than a strict pass@1‑style greedy metric. As $N$ grows, the finite‑$N$ bias component shrinks, the $\Linf(k;N)$ curves steepen, and $\hat{\beta}_{\mathrm{eff}}(N)$ approaches the intrinsic difficulty index $\beta$ (Theorem~\ref{thm:two_tail}).

\textbf{Limitations.}
(i) The greedy metric is intentionally strict (explicit finals only), which undercounts improvements not emitted in canonical form; we therefore compute valid‑only MSE. (ii) Using one teacher sample per training item does not directly target the teacher’s mean; training‑side MSE gains are modest but consistent with the theory’s emphasis on inference‑time behavior. (iii) Prefactors and saturation $N$ depend on LoRA capacity and teacher/student choice; the qualitative coupling of training and inference exponents is robust.


\end{document}